\PassOptionsToPackage{prologue,dvipsnames}{xcolor}
\documentclass{article}
\pdfoutput=1
\usepackage[subfig,neurips]{definition}

% if you need to pass options to natbib, use, e.g.:
    \PassOptionsToPackage{numbers, compress}{natbib}
% before loading neurips_2024

% ready for submission
\usepackage[preprint]{neurips_2024}

% to compile a preprint version, e.g., for submission to arXiv, add add the
% [preprint] option:
%     \usepackage[preprint]{neurips_2024}

% to compile a camera-ready version, add the [final] option, e.g.:
% \usepackage[final]{neurips_2024}

% to avoid loading the natbib package, add option nonatbib:
% \usepackage[nonatbib]{neurips_2024}

\usepackage[utf8]{inputenc} % allow utf-8 input
\usepackage[T1]{fontenc}    % use 8-bit T1 fonts
\usepackage{url}            % simple URL typesetting
\usepackage{booktabs}       % professional-quality tables
\usepackage{amsfonts}       % blackboard math symbols
\usepackage{nicefrac}       % compact symbols for 1/2, etc.
\usepackage{microtype}      % microtypography
\usepackage[dvipsnames]{xcolor}         % colors
\usepackage{color}
\usepackage{fontawesome5}
\usepackage{mdframed}
\usepackage{tikz}
\usepackage[most]{tcolorbox}
\usepackage{booktabs}
\usepackage[linesnumbered,norelsize,ruled,vlined]{algorithm2e}
\usepackage{algpseudocode}
\newcommand{\themodel}{\texttt{MolPeg}\xspace}
% \setlength{\belowcaptionskip}{-0.5cm}
%%%%% NEW MATH DEFINITIONS %%%%%

\usepackage{amsmath,amsfonts,bm}

% Mark sections of captions for referring to divisions of figures

% Highlight a newly defined term

% Figure reference, lower-case.

% Figure reference, capital. For start of sentence

% Section reference, lower-case.

% Section reference, capital.

% Reference to two sections.

% Reference to three sections.

% Reference to an equation, lower-case.
\def\eqref#1{equation~\ref{#1}}
% Reference to an equation, upper case

% A raw reference to an equation---avoid using if possible

% Reference to a chapter, lower-case.

% Reference to an equation, upper case.

% Reference to a range of chapters

% Reference to an algorithm, lower-case.

% Reference to an algorithm, upper case.

% Reference to a part, lower case

% Reference to a part, upper case

\def\1{\bm{1}}

% Random variables

% rm is already a command, just don't name any random variables m

% Random vectors

% Elements of random vectors

% Random matrices

% Elements of random matrices

% Vectors

\def\vv{{\bm{v}}}

\def\vx{{\bm{x}}}

% Elements of vectors

% Matrix

% Tensor
\DeclareMathAlphabet{\mathsfit}{\encodingdefault}{\sfdefault}{m}{sl}
\SetMathAlphabet{\mathsfit}{bold}{\encodingdefault}{\sfdefault}{bx}{n}
% \newcommand{\tens}[1]{\bm{\mathsfit{#1}}}

% Graph

\def\gD{{\mathcal{D}}}

\def\gL{{\mathcal{L}}}

\def\gT{{\mathcal{T}}}

% Sets

% Don't use a set called E, because this would be the same as our symbol
% for expectation.

\def\sN{{\mathbb{N}}}

% Entries of a matrix

% entries of a tensor
% Same font as tensor, without \bm wrapper
% \newcommand{\etens}[1]{\mathsfit{#1}}

% The true underlying data generating distribution

% The empirical distribution defined by the training set

% The model distribution

% Stochastic autoencoder distributions

 % Laplace distribution

\newcommand{\E}{\mathbb{E}}

\newcommand{\R}{\mathbb{R}}

% Wolfram Mathworld says $L^2$ is for function spaces and $\ell^2$ is for vectors
% But then they seem to use $L^2$ for vectors throughout the site, and so does
% wikipedia.

 % See usage in notation.tex. Chosen to match Daphne's book.

% \DeclareMathOperator*{\argmax}{arg\,max}
% \DeclareMathOperator*{\argmin}{arg\,min}

\usepackage{amsmath, amssymb, amsfonts,txfonts}
\usepackage{hyperref}       % hyperlinks
%% as per the requirement new theorem styles can be included as shown below

\usepackage[final]{changes}
\definechangesauthor[name={yuyan}, color=blue]{yy}
\newtheorem{assumption}[theorem]{Assumption}

\newtheorem{proposition}{Proposition}

\title{Beyond Efficiency: Molecular Data Pruning for Enhanced Generalization}

\author{%
\normalsize
Dingshuo Chen\textsuperscript{\textmd{1}}\quad
Zhixun Li\textsuperscript{\textmd{2}}\quad
Yuyan Ni\textsuperscript{\textmd{3}}\quad
Guibin Zhang\textsuperscript{\textmd{4}}\quad\
Ding Wang\textsuperscript{\textmd{1}}\quad\\
\normalsize
\textbf{Qiang Liu}\textsuperscript{\textmd{1}}\quad
\textbf{Shu Wu}\textsuperscript{\textmd{1}}\quad
\textbf{Jeffrey Xu Yu}\textsuperscript{\textmd{2}}\quad
\textbf{Liang Wang}\textsuperscript{\textmd{1}}\\
\textsuperscript{\textmd{1}}New Laboratory of Pattern Recognition\\ Institute of Automation, Chinese Academy of Sciences\\
\textsuperscript{\textmd{2}}Department of Systems Engineering and Engineering Management\\ The Chinese University of Hong Kong\\
\textsuperscript{\textmd{3}}Academy of Mathematics and Systems Science, Chinese Academy of Sciences\\
\textsuperscript{\textmd{4}}Tongji University\\
% \normalsize\rule{0pt}{1em}
}

\begin{document}

\maketitle

\begin{abstract}

With the emergence of various molecular tasks and massive datasets, how to perform efficient training has become an urgent yet under-explored issue in the area. Data pruning (DP), as an oft-stated approach to saving training burdens, filters out less influential samples to form a coreset for training. However, the increasing reliance on pretrained models for molecular tasks renders traditional in-domain DP methods incompatible. Therefore, we propose a \underline{Mol}ecular data \underline{P}runing framework for \underline{e}nhanced \underline{G}eneralization (\themodel), which focuses on the \emph{source-free data pruning} scenario, where data pruning is applied with pretrained models. By maintaining two models with different updating paces during training, we introduce a novel scoring function to measure the informativeness of samples based on the loss discrepancy. As a plug-and-play framework, \themodel realizes the perception of both source and target domain and consistently outperforms existing DP methods across four downstream tasks. Remarkably, it can surpass the performance obtained from full-dataset training, even when pruning up to 60-70\% of the data on HIV and PCBA dataset. Our work suggests that the discovery of effective data-pruning metrics could provide a viable path to both \textbf{enhanced efficiency} and \textbf{superior generalization} in transfer learning.
\end{abstract}
\section{Introduction}
The research enthusiasm for developing molecular foundation models is steadily increasing \cite{mendez2022mole,luo2023molfm,chang2024bidirectional, klaser2024texttt,frey2023neural}, attributed to its foreseeable performance gains with ever-larger model and amounts of data, as observed neural scaling laws \cite{kaplan2020scaling} and emergence ability \cite{wei2022emergent} in other domains. However, the computational and storage burdens are daunting in model training \cite{gilmer2017neural}, hyperparameter tuning, and model architecture search \cite{oloulade2023cancer,qin2022graph,jiang2023uncertainty}. It is therefore urgent to ask for training-efficient molecular learning in the community.

Data pruning (DP), in a natural and simple manner, involves the selection of the most influential samples from the entire training dataset to form a coreset as \emph{paragons} for model training. The primary goal is to alleviate training costs by striking a balance point between efficiency and performance compromise. A trend in this field is developing data influence functions \cite{borsos2020coresets, koh2017understanding, yang2022dataset}, training dynamic metrics \cite{toneva2018empirical, sener2017active, welling2009herding, paul2021deep}, and coreset selection \cite{huggins2016coresets,campbell2019automated, kim2023coreset} for lossless - although typically compromised - model generalization. 
When it comes to molecular tasks, transfer learning, particularly the \emph{\textbf{pretrain-finetune}} paradigm, has been regarded as the de-facto standard for enhanced training stability and superior performance \cite{fang2022geometry, zhu2022unified, wang2022molecular}. However, existing DP methods are purposed for train-from-scratch setting, i.e., the model is randomly initialized and trained on the selected coreset. A natural question arises as to \emph{whether or not current DP methods remain effective when applied with pre-trained models}. 
Experimental analysis, as illustrated in \cref{fig:compare} (Left),  suggests a negative answer. Most existing pruning strategies exhibit inferior results relative to the performance achieved with the full dataset, even falling short of simple random pruning. 

In contrast to the existing DP approaches, which focus solely on a single target domain, the incorporation of pretrained model introduces an additional source domain, thereby inevitably exposing us to the challenge of distribution shift. Unfortunately, this is especially severe in molecular tasks, owing to the limited diversity of large-scale pretraining datasets compared to the varied nature of downstream tasks. As illustrated in \cref{fig:compare} (Right), we investigate the distribution patterns of several important molecular properties across the upstream and downstream datasets following Beaini et al. \cite{beaini2023towards}. The observed disparities impede the model generalization, thus making DP with pretrained models a highly non-trivial task.  We define this out-of-domain DP setting as \textbf{\emph{source-free data pruning}}. It entails removing data from downstream tasks leveraging pre-trained models while remaining agnostic to the specifics of the pre-training data.

Of particular relevance to this work are approaches that propose DP methods for transfer learning \cite{zhang2023selectivity, xie2023data}, which also target cross-domain scenarios. Despite the promising results they achieved, these methods select pretraining samples based on downstream data distribution, which necessitates reevaluation of previously selected samples and retraining heavy models as new samples involving, undermining the goal of achieving generalization and universality in pretraining. To this end, we take a step towards designing a DP method under the source-free data pruning setting to achieve \textbf{efficient and effective} model training, which aligns better with practical deployment for molecular tasks.

\begin{figure}[t]
    \centering
    \begin{tabular}{cc}
    \hspace*{0mm}\includegraphics[width=.4\textwidth,height=!]{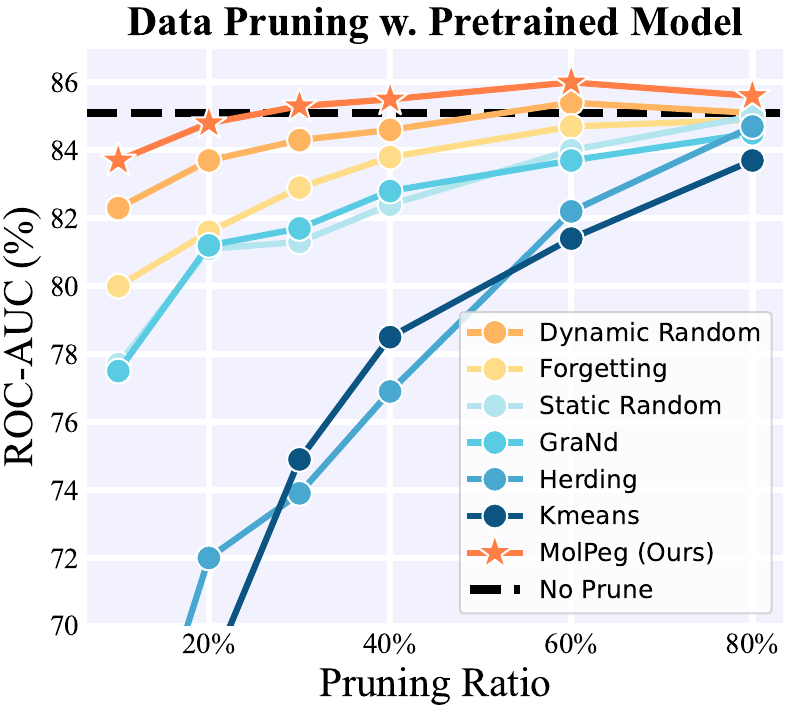}& 
    \includegraphics[width=.44\textwidth,height=!]{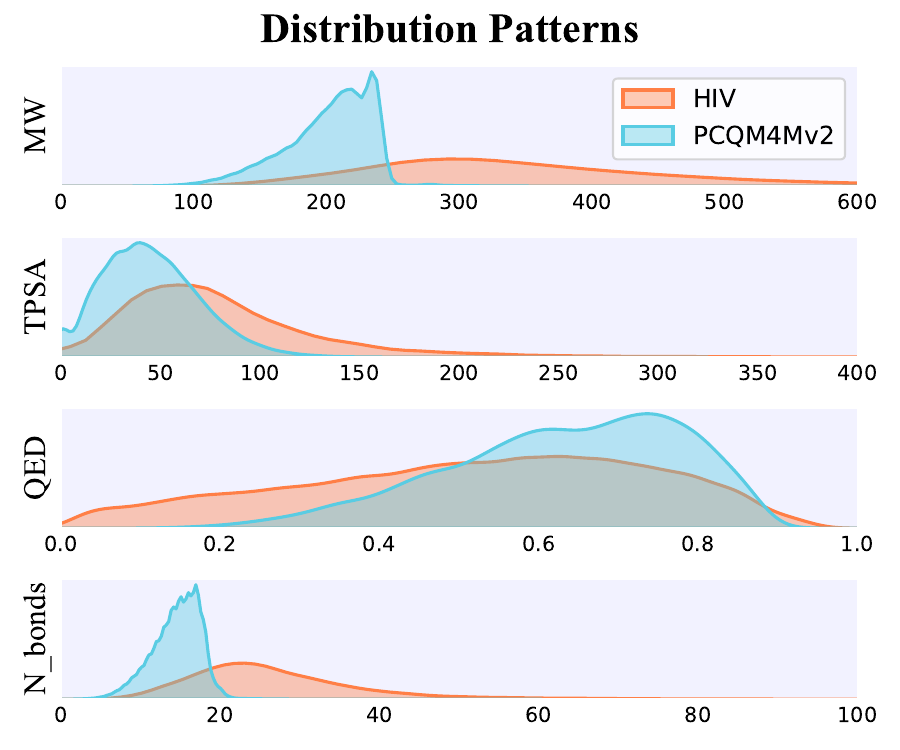}
    \end{tabular}
    \caption{\textbf{(Left)} The performance comparison of different data pruning methods in HIV dataset under source-free data pruning setting. \textbf{(Right)} Distribution patterns of four important molecular features - molecular weight (MW), topological polar surface area (TPSA), Quantitative Estimate of Drug-likeness (QED) and number of bonds - in HIV \cite{AIDS:as} and PCQM4Mv2 \cite{nakata2017pubchemqc} dataset, which are used for pretraining and finetuning, respectively.}
    \label{fig:compare}
    \vspace{-3mm}
\end{figure}

In this work, we propose a \underline{Mol}ecular data \underline{P}runing framework for \underline{e}nhanced \underline{G}eneralization, which we term \themodel for brevity.
The core idea of \themodel is to achieve cross-domain perception via maintaining an online model and a reference model during training, which places emphasis on the target and source domain, respectively. Besides, we design a novel scoring function to simultaneously select easy (representative) and hard (challenging) samples by comparing the absolute discrepancy between model losses. We further take a deep dive into the theoretical understanding and glean insight on its connection with the previous DP strategies. 
Note that our proposed \themodel framework is generic, allowing for seamless integration of off-the-shelf pretrained models and architectures. To the best of our knowledge, this is the first work that studies how to perform data pruning for molecular learning from a transfer learning perspective. Our contributions can be summarized as follows:

\begin{itemize}
    \item We analyze the challenges of efficient training in the molecular domain and formulate a tailored DP problem for transfer learning, which better aligns with the practical requirements of molecular pre-trained models.
    \vspace{-0.5mm}
    \item We propose an efficient data pruning framework that can perceive both the source and target domains. It can achieve lightweight and effective DP without the need for retraining, facilitating easy adaptation to varied downstream tasks. We also provide a theoretical understanding of \themodel and build its connections with existing DP strategies.
    \vspace{-0.5mm}
    \item We conduct extensive experiments on 4 downstream tasks, spanning different modalities, pertaining strategies, and task settings. Our method can surpass the full-dataset performance when up to 60\%-70\% of the data is pruned, which validates the effectiveness of our approach and unlocks a door to enhancing model generalization with fewer samples.
\end{itemize}

\begin{figure}[t]
    \centering
    \includegraphics[width=\textwidth,height=!]{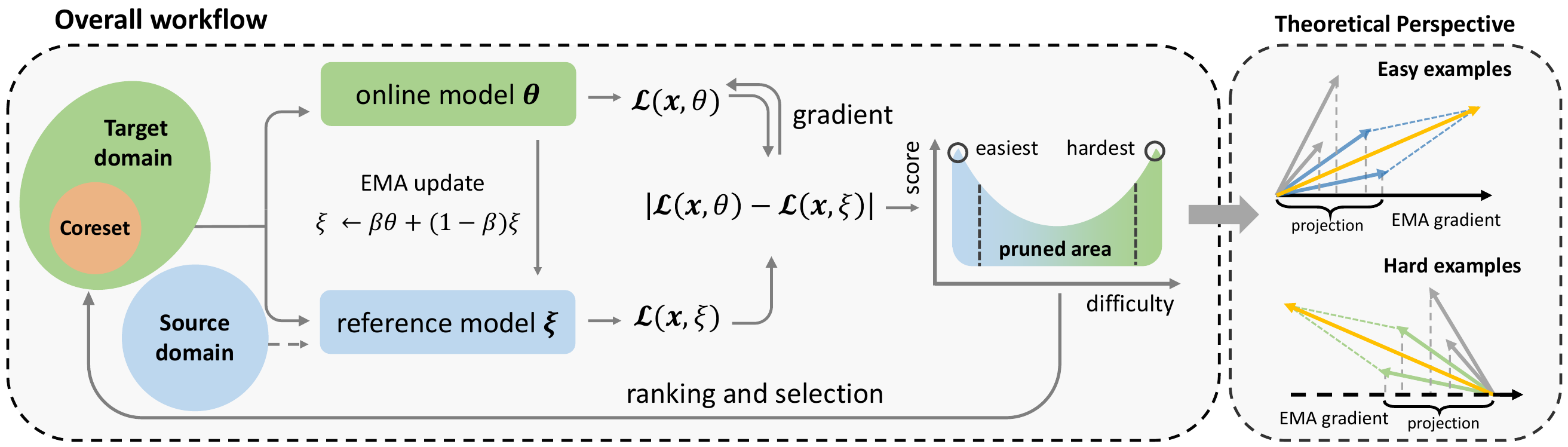}
    \caption{The overall framework of \themodel. \textbf{(Left)} We maintain an online model and a reference model with different updating paces, which focus on the \textcolor{ForestGreen}{target} and \textcolor{Blue}{source} domain, respectively. After model inference, the samples are scored by the absolute loss discrepancy and selected in ascending order. The easiest and hardest samples are given the largest score and selected to form the coreset. \textbf{(Right)} The selection process of \themodel can be interpreted from a gradient projection perspective. Samples with low projection norms (grey) are discarded, while those with high norms are kept.}
    \label{fig:framework}
    \vspace{-3mm}
\end{figure}
\section{Preliminaries}
In this section, we take a detour to revisit the traditional data pruning setting and \emph{pretrain-finetune} paradigm before introducing the problem formulation of source-free data pruning. 

\textbf{Problem statement of traditional data pruning}.
Consider a learning scenario where we have a large training set denoted as $\mathcal{D}=\{(\bm{x_i},y_i)\}_{ i=1}^{|\mathcal{D}|}$, consisting of input-output pairs $(\bm{x}_i,y_i)$, where $\bm{x}_i \in \mathcal{X}$ represents the input and $y_i \in \mathcal{Y}$ denotes the ground-truth label corresponding to $\bm{x}_i$. Here, $\mathcal{X}$ and $\mathcal{Y}$ refer to the input and output spaces, respectively. The objective of traditional data pruning is to identify a subset $\hat{\mathcal{D}} \subset \mathcal{D}$, 
% satisfying the constraint $|\hat{\mathcal{D}}| < |\mathcal{D}|$, %'subset' already include the <= meaning
that captures the most informative instances. The model trained on this subset $\hat{\mathcal{D}}$ should yield a slightly inferior or comparative performance to the model trained on the entire training set $\mathcal{D}$. Thus they need to strike a balance between efficiency and performance. % modified: hat 不对应；参数or模型；所以需要balance efficiency & performance

\textbf{Revisit on transfer learning}. 
Given source and target domain datasets $\mathcal{D}_{\mathcal{S}}$ and $\mathcal{D}_{\mathcal{T}}$, the goal of pretraining is to obtain a high-quality feature extractor $f$ in a supervised or unsupervised manner. While in the finetuning phase, we aim to adapt the pretrained $f$ in conjunction with output head $g$ to the target dataset $\mathcal{D}_{\mathcal{T}}$. 

Considering the proficiency of molecular pre-trained models in capturing meaningful chemical spaces, their widespread usage in enhancing performance across diverse molecular tasks has become commonplace.
This necessitates a reassessment of the conventional approach to DP within the molecular domain and, more broadly, within the field of transfer learning. Previous attempts \cite{zhang2023selectivity, xie2023data} in data pruning for transfer learning primarily focus on trimming upstream data, selecting samples that closely match the distribution of downstream tasks to align domain knowledge. 
However, this necessitates retraining the model from scratch, which is notably ill-suited for the molecular domain, where the continual influx of new molecules introduces novel functionalities and structures.
To this end, we propose a tailored DP problem for molecular transfer learning:

\textbf{Problem formulation} (Source-free data pruning). \emph{Given a target domain dataset $\mathcal{D}_{\mathcal{T}}$ and a pretrained feature extractor parameterized by $\theta_{\mathcal{S}}$, we aim to identify a subset $\hat{\mathcal{D}_{\mathcal{T}}} \subset \mathcal{D}_{\mathcal{T}}$ for training, while being agnostic of the source domain dataset $\mathcal{D}_{\mathcal{S}}$, to maximize the model generalization.
}

\section{Methodology}
As with generic data pruning pipelines, the \themodel framework is divided into two stages, scoring and selection. In the first stage, we define a scoring function to measure the informativeness of samples and apply it to the training set. In the subsequent stage, given the sample scores, we rank them in ascending order and maintain the high-ranking samples for training.  Note that our pruning method is dynamically performed during the training process, rather than conducted before training.

We next introduce the \themodel framework in detail. We track the training dynamics of two models with different update paces. For each training sample, we measure the difference in loss between the two models to quantify its importance, and then make the final selection based on this metric. In the following parts, we first intuitively introduce our design of the scoring function. Then, we further explore the theoretical support behind the effectiveness of the \themodel. The connections with existing DP methods are discussed in Appendix~\ref{sec app: Theoretical Comparison}. The overall framework is illustrated in \cref{fig:framework}
.
\subsection{The \themodel framework}

The design of the scoring function addresses two key issues, (1) how to achieve the perception of source and target domain and (2) how to measure the informativeness of the samples.
% We also provide the pseudo-code of \themodel in \cref{alg:pseudo}, and illustrate the overall framework in \cref{fig:framework}.

\textbf{Cross-domain perception.} Since we are unable to access the upstream dataset, the pre-trained model serves as the only entry point of the source domain. During the finetuning stage, apart from \emph{online encoder} undergoing gradient optimization via back-propagation, we further maintain a \emph{reference encoder} updated with exponential moving average (EMA) to perceive the cross-domain knowledge. Note that both encoders are initialized by pretrained model $\theta_0=\xi_0=\theta^{\mathcal{S}}$, where $\theta_t$ and $\xi_t$ denotes the parameters of online and reference model at \replaced[id=yy]{batch step $t$}{epoch $t$}, respectively. They are updated as follows:
\begin{align}\label{eq:1}
    \theta_{t+1}=\theta_{t}-\alpha \nabla_\theta\gL(\hat{\gD}_t,\theta_t)
    \quad
    \xi_t=\beta \theta_t+(1-\beta)\xi_{t-1}
\end{align}
where $\alpha$ is the learning rate and $\beta \in [0, 1)$ is the pace coefficient that controls the degree of history preservation. Here $\hat{\gD}_t$ is the selected finetuning dataset for epoch $t$, and $\nabla_\theta\gL(\hat{\gD}_{t},\theta_{t})$ denotes the average gradient $\frac{1}{|\hat{\gD}_{t}|}\sum_{x_i\in\hat{\gD}_{t}}\nabla_\theta\gL(x_i,\theta_{t})$ for short. Intuitively, We control the influence of target domain on the reference encoder via EMA. With a small update pace $\beta$, the online encoder prioritizes target domain, while the reference encoder emphasizes source domain.

\begin{wrapfigure}{r}{.45\textwidth}
\centerline{
\begin{tabular}{c}
\hspace*{-3mm}\includegraphics[width=.45\textwidth,height=!]{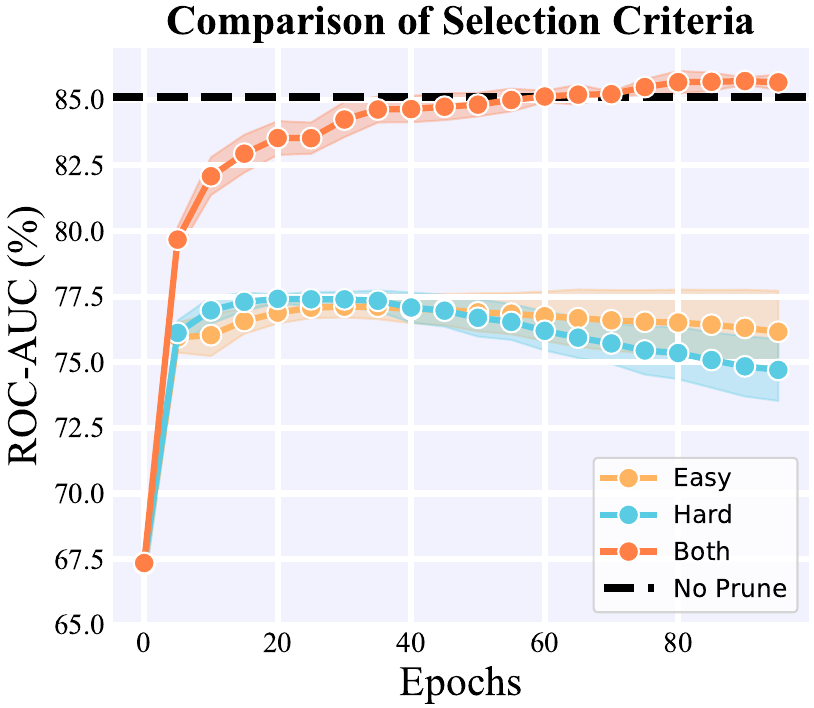}
\end{tabular}}
\caption{Performance comparison of selection criteria on HIV dataset when pruning 40\% samples.}
\label{fig:criteria}
\end{wrapfigure}
\textbf{Informativeness measurement and selection.} By far we explicitly represent the inaccessible source domain knowledge with the help of the reference model, facilitating us to further quantify the informativeness of each sample in the cross-domain context. 
Our motivation for measuring the sample informativeness comes from a recent work that improves the neural scaling laws \cite{sorscher2022beyond}. They suggest that the best pruning strategy depends on the amount of initial data. When the data volume is large, retaining the hardest samples yields better pruning results than retaining the easiest ones; the conclusion is the opposite when the data volume is small. This contrasts with the conclusion that only the hardest samples should be selected \cite{toneva2018empirical}. From an intuitive perspective, simple samples are more representative, allowing the model to adapt to downstream tasks more quickly, while hard samples are crucial for model generalization since they are considered \emph{supporting vectors} near the decision boundaries. This debate highlights that in data pruning, how to perform a mixture of easy and hard samples is a critical factor. As shown in \cref{fig:criteria}, when 60\% samples in the HIV dataset are pruned, simply selecting the easiest or hardest samples leads to a performance drop in later epochs.  

Therefore, we opt to retain both easy and hard samples instead of singularly removing one type. To measure the information gap between domains, we adopt both \emph{online} and \emph{reference encoder} to infer each sample and calculate the absolute loss discrepancy between them:
 \begin{equation}
     \hat{\gD}_t=\{\vx\in\gD_t \big| |\gL(\vx,\theta_t)-\gL(\vx,\xi_t)| \geq \delta\},
 \end{equation}

where $\gD_t\in \gD^\gT$ comprises the target domain data sampled for batch step $t$ and $\hat{\gD}_t\in\gD_t$ comprises the data selected by \themodel. $\delta$ is not a constant, but a threshold determined by the pruning ratio. Specifically, the rank of $\delta$ in the absolute loss discrepancy sequence $\{|\gL(\vx_i,\theta_t)-\gL(\vx_i,\xi_t)|\}_{i=1}^{|\gD_t|}$ is $|\hat{\gD}_t|$, i.e. \added{pruning ratio$\times|\gD_t|$}. 
It is easy to infer that a positive loss discrepancy, i.e. $\gL(\vx,\theta_t)-\gL(\vx,\xi_t)>0$, indicates the model struggles to accurately distinguish the sample, identifying it as hard one. Conversely, a negative loss discrepancy indicates that the model can easily improve its accuracy, marking it as an easy sample. Therefore, intuitively, we dynamically assess the learning difficulty of samples during the training process. By measuring the absolute value of the loss discrepancy, we keep the simplest (most representative) and the hardest (most challenging) samples, which are integrated as the most informative ones (Orange line in \cref{fig:criteria}). We also provide the pseudo-code of \themodel in \cref{alg:pseudo}.

\subsection{Theoretical Understanding}
In this section, we explore the theoretical underpinnings of the data selection process in \themodel. Recall that our scoring function is defined by loss discrepancy, we further make use of Taylor expansion on the designed scoring function. Then, from the gradient perspective, i.e., the first-order expansion term, we derived the following propositions and the complete proof is provided in the \cref{supp:proof}.

% where the high-order term is ignorable with the slow parameter update assumption. 
\begin{assumption}[Slow parameter updating]\label{assumption}
    \emph{Assume the learning rate is small enough, so that the parameter update $\Delta \theta_t=\theta_{t+1}-\theta_{t}$ is small for every time step, i.e. $||\Delta \theta_t||\leq\epsilon$, $\forall t\in\sN$, $\epsilon$ is a small constant.}
\end{assumption}
\begin{proposition}[Interpretation of loss discrepancy]\label{thm: prop1}
     \emph{With Assumption~\ref{assumption}, the loss discrepancy can be approximately expressed by the dot product between the data gradient and the ``EMA gradient":    
    \begin{equation}\label{eq:taylor}
    \begin{aligned}
        \gL(\vx,\xi_t)-\gL(\vx,\theta_t)=\alpha\nabla_\theta\gL(\vx,\theta_t)\vv^{EMA}_t+ O(\epsilon^2),
    \end{aligned}
    \end{equation}
    where $\vv^{EMA}_t$ denotes $\sum_{j=1}^t(1-\beta)^j \nabla_\theta\gL(\hat{\gD}_{t-j},\theta_{t-j})  $, i.e. the weighted sum of the historical gradients, \added{which we termed as ``EMA gradient"}. }
\end{proposition}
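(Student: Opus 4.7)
The plan is to establish (\ref{eq:taylor}) by combining a first-order Taylor expansion of $\gL(\vx,\cdot)$ about $\theta_t$ with a closed-form ``unrolling'' of the EMA recursion in (\ref{eq:1}). Assumption~\ref{assumption} will be used to guarantee that $\xi_t-\theta_t$ is $O(\epsilon)$, so that the quadratic Taylor remainder collapses to $O(\epsilon^2)$.

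First, assuming $\gL(\vx,\cdot)$ is sufficiently smooth in $\theta$ in a neighborhood of $\theta_t$, I would write
\begin{equation*}
\gL(\vx,\xi_t)-\gL(\vx,\theta_t)=\nabla_\theta\gL(\vx,\theta_t)\,(\xi_t-\theta_t)+O(\|\xi_t-\theta_t\|^2).
\end{equation*}
The content of the proposition then reduces to identifying $\xi_t-\theta_t$ with $\alpha\,\vv^{EMA}_t$.

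Second, I would derive a closed-form expression for $\xi_t-\theta_t$ from the EMA update $\xi_t=\beta\theta_t+(1-\beta)\xi_{t-1}$. Subtracting $\theta_t$ from both sides gives $\xi_t-\theta_t=(1-\beta)(\xi_{t-1}-\theta_t)$; inserting $\pm\theta_{t-1}$ and using the SGD identity $\theta_{t-1}-\theta_t=\alpha\nabla_\theta\gL(\hat{\gD}_{t-1},\theta_{t-1})$ from (\ref{eq:1}) yields the linear recursion
\begin{equation*}
\xi_t-\theta_t=(1-\beta)(\xi_{t-1}-\theta_{t-1})+(1-\beta)\alpha\nabla_\theta\gL(\hat{\gD}_{t-1},\theta_{t-1}).
\end{equation*}
Iterating down to the initial condition $\xi_0-\theta_0=0$ (both encoders are initialized from $\theta^\gS$) gives
\begin{equation*}
\xi_t-\theta_t=\alpha\sum_{j=1}^{t}(1-\beta)^j\nabla_\theta\gL(\hat{\gD}_{t-j},\theta_{t-j})=\alpha\,\vv^{EMA}_t,
\end{equation*}
and substituting this into the Taylor display immediately produces the stated identity.

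The final step is to justify labeling the remainder as $O(\epsilon^2)$. By Assumption~\ref{assumption}, each one-step increment $\alpha\nabla_\theta\gL(\hat{\gD}_{t-j},\theta_{t-j})=-\Delta\theta_{t-j}$ has norm at most $\epsilon$, so summing the geometric series yields $\|\xi_t-\theta_t\|\le \epsilon(1-\beta)/\beta=O(\epsilon)$ for fixed $\beta$, whence $O(\|\xi_t-\theta_t\|^2)=O(\epsilon^2)$. I expect the main obstacle to be bookkeeping rather than analysis: aligning the telescoped indices with the definition of $\vv^{EMA}_t$ and tracking signs carefully, since the EMA weighting in (\ref{eq:1}) is written with ``history weight'' $1-\beta$ rather than the more conventional $\beta$. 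The Taylor-remainder bound itself is routine once a standard $C^2$-smoothness hypothesis on $\gL(\vx,\cdot)$ is in place.
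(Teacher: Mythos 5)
Your proposal is correct and follows essentially the same route as the paper: a first-order Taylor expansion of $\gL(\vx,\cdot)$ at $\theta_t$, combined with unrolling the EMA recursion to the closed form $\xi_t-\theta_t=\alpha\sum_{j=1}^{t}(1-\beta)^j\nabla_\theta\gL(\hat{\gD}_{t-j},\theta_{t-j})$ and the geometric-series bound $\|\xi_t-\theta_t\|\le\frac{1-\beta}{\beta}\epsilon$ to control the remainder (the paper isolates this last step as Lemma~\ref{lemma:1}, phrased in terms of $\Delta\theta_{t-j}=-\alpha\nabla_\theta\gL(\hat{\gD}_{t-j},\theta_{t-j})$, but the computation is identical). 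Your explicit flagging of the $C^2$-smoothness hypothesis needed for the Taylor remainder is a point the paper leaves implicit.
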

It indicates that the scoring function is essentially influenced by the magnitude of the dot product between the data gradient and the EMA gradient, as illustrated in \cref{fig:framework} (right). Given the EMA gradient, the size of the dot product is influenced by two factors: the norm of $\nabla_\theta\gL(\vx,\theta_t)$ and the angle between the two vectors.
\emph{(i)} A larger norm of the current data gradient \replaced{is}{indicates a greater impact on parameter updates, making it} more likely to be selected, which resembles the criteria of GraNd score. More connections to several well-known scoring functions are provided in the appendix \ref{sec app: Theoretical Comparison}.
(ii) If the current gradient direction closely aligns with the (opposite) EMA gradient direction, it often indicates an easy (hard) optimization of the sample, corresponding to the goal of selecting simple and hard samples in the previous analysis. Conversely, samples with gradient directions orthogonal to the EMA gradient are discarded.

\added{ In the following proposition, we examine the gradient of the selected samples and analyze simple and hard samples separately. Since the selection is performed at each fixed batch time step, we focus on one step of selection and omit the common time subscript $t$. Note that this result involves certain simplifications and approximations, and a formal version is provided in the appendix.}
\begin{proposition}[Gradient projection interpretation of \themodel, informal]\label{thm: prop2}
\emph{Let $\gD^+ \subseteq \gD$ and $\hat{\gD}^+ \subseteq \hat{\gD}$ denote the sets of samples for which the dot products between the data gradients and the "EMA gradient" are positive. Then, the gradient of the selected "simple" samples can be expressed as:
    \begin{equation}   \label{eq:4}     \nabla_\theta\gL(\hat{\gD}^+,\theta)=\nabla_\theta\gL(\gD^+,\theta)+a\vv^{EMA}, a\geq 0.
\end{equation}
Similarly, we define $\gD^-\in\gD$ and $\hat{\gD}^-\in\hat{\gD}$ as samples that have negative dot products, then
    \begin{equation}     \label{eq:5}     \nabla_\theta\gL(\hat{\gD}^-,\theta)=\nabla_\theta\gL(\gD^-,\theta)+b\vv^{EMA}, b\leq 0.
\end{equation}
$a=0$ and $b=0$ holds if and only if the loss discrepancy across $\gD^+$ and $\gD^-$ is uniform respectively, which are uncommon scenarios.}
\end{proposition}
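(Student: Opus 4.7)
The plan is to invoke Proposition~\ref{thm: prop1} to reinterpret the selection rule, which ranks samples by $|\gL(\vx,\theta)-\gL(\vx,\xi)|$, as a rule on the signed scalar $\nabla_\theta\gL(\vx,\theta)\cdot\vv^{EMA}$, and then to decompose each per-sample gradient into its components parallel and orthogonal to $\vv^{EMA}$. Fixing a batch step and suppressing the subscript $t$, Proposition~\ref{thm: prop1} gives $\gL(\vx,\xi)-\gL(\vx,\theta)=\alpha\,\nabla_\theta\gL(\vx,\theta)\cdot\vv^{EMA}+O(\epsilon^2)$. Dropping this second-order term (one of the stated approximations), $\gD^+$ consists of samples with strictly positive dot product and $\gD^-$ of those with strictly negative dot product; $\hat{\gD}^+$ is then the portion of $\gD^+$ whose dot product exceeds the pruning threshold, i.e.\ the most positive values, and $\hat{\gD}^-$ is the portion of $\gD^-$ whose dot product lies below the negative threshold, i.e.\ the most negative values.

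I would then introduce the orthogonal decomposition $\nabla_\theta\gL(\vx,\theta)=p_\vx\vv^{EMA}+\vw_\vx$ with $p_\vx=\nabla_\theta\gL(\vx,\theta)\cdot\vv^{EMA}/\|\vv^{EMA}\|^2$ and $\vw_\vx\cdot\vv^{EMA}=0$. Averaging this identity over $\hat{\gD}^+$ and over $\gD^+$ and subtracting yields
\[
\nabla_\theta\gL(\hat{\gD}^+,\theta)-\nabla_\theta\gL(\gD^+,\theta)=\bigl(\bar p_{\hat{\gD}^+}-\bar p_{\gD^+}\bigr)\vv^{EMA}+\bigl(\bar\vw_{\hat{\gD}^+}-\bar\vw_{\gD^+}\bigr).
\]
Because $\hat{\gD}^+$ is the top-scoring portion of $\gD^+$ under $p_\vx$, a direct top-$k$-mean argument gives $\bar p_{\hat{\gD}^+}\ge\bar p_{\gD^+}$, with equality iff $p_\vx$ is constant on $\gD^+$, i.e.\ iff the loss discrepancy over $\gD^+$ is uniform. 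The symmetric computation on $\gD^-$, where the most negative $p_\vx$ are selected, gives $\bar p_{\hat{\gD}^-}\le\bar p_{\gD^-}$ with the analogous equality condition, and setting $a=\bar p_{\hat{\gD}^+}-\bar p_{\gD^+}\ge0$ and $b=\bar p_{\hat{\gD}^-}-\bar p_{\gD^-}\le0$ recovers \eqref{eq:4} and \eqref{eq:5} modulo the orthogonal residuals $\bar\vw_{\hat{\gD}^+}-\bar\vw_{\gD^+}$ and $\bar\vw_{\hat{\gD}^-}-\bar\vw_{\gD^-}$.

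The main obstacle, and the reason the statement is flagged as informal, is dispatching these orthogonal residuals. Since the selection depends only on the scalar $p_\vx$, the cleanest route is an independence-type hypothesis: assume $\vw_\vx$ has zero conditional mean given $p_\vx$, so that the residual averages vanish (either exactly in expectation or approximately in a large-batch limit), leaving the difference exactly aligned with $\vv^{EMA}$. An alternative is to retain the residuals as an explicit error term and read off $a,b$ as the $\vv^{EMA}$-coefficients after an orthogonal projection onto $\vv^{EMA}$. I expect the formal version in the appendix to adopt one of these two routes; the remaining ingredients, namely the signs of $a$ and $b$ and the equality-iff-uniform characterization, follow directly from the top-$k$-mean inequality applied to $p_\vx$ on $\gD^+$ and $\gD^-$.
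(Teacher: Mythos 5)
Your proposal is correct and follows essentially the same route as the paper's proof: reduce the selection rule to the signed dot product via Proposition~\ref{thm: prop1}, apply a top-$k$-mean inequality to get $a\geq 0$, $b\leq 0$ with equality iff the discrepancy is uniform, and dispatch the orthogonal residuals by a zero-mean/large-sample (CLT) argument. The paper's formal appendix version does exactly what you anticipate, retaining the orthogonal components as explicit terms $c\vv^{EMA}_\perp$, $d\vv^{EMA}_\perp$ and arguing they vanish in expectation.
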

Therefore, our data selection strategy essentially increases the weight of the (opposite) EMA gradient direction in the data gradient for easy (hard) samples. 
When $\gD^+$ predominates, indicating a majority of simple samples in the dataset, this simplified model is akin to the momentum optimization strategy, which utilizes the sum of the current data gradient and the weighted EMA gradient to update the model parameters. This suggests that retaining simple samples may enhance optimization stability, allowing the model to overcome saddle points and local minima \citep{goodfellow2016deep}.
However, our method differs from the momentum optimization strategy in two key aspects. Firstly, we preserve directions opposite to the EMA gradient to target hard and forgettable samples. Secondly, our EMA gradient, which records the gradient of the coreset rather than the entire set, can retain more historical information under the same update pace.

% In the sense of neglecting higher-order small quantities, the dot product of the gradient of the selected data with the ``EMA gradient" is no less than that of the entire dataset.
%     % \begin{equation}
%     % |\nabla_\theta\gL(\hat{\gD}_t,\theta_t)\cdot \vv^{EMA}_t| + O(\frac{\epsilon^2}{\alpha})\geq  |\nabla_\theta\gL(\gD_t,\theta_t)\cdot \vv^{EMA}_t|.
%     % \end{equation}
% The condition for equality is that the loss discrepancy across the entire dataset is uniform, which is rare in practice and our strategy degenerates to random selection.
%   % There exists $p\in\R$ s.t. $\nabla_\theta\gL(\gD_t,\theta_t) + p\vv^{EMA}_t \varparallel \nabla_\theta\gL(\hat{\gD}_t,\theta_t)$
% \end{proposition}
% $O(\frac{\epsilon^2}{\alpha})$ is much smaller compared to the gradient product. 

\section{Experimental Settings}

\subsection{Datasets and tasks}
To comprehensively validate the effectiveness of our proposed \themodel, we conduct experiments on three datasets, i.e., HIV \cite{AIDS:as}, PCBA \cite{wang2012pubchem}, and QM9 \cite{ruddigkeit2012enumeration}, covering four types of molecular tasks. These tasks span two molecular modalities—2D graph and 3D geometry—as well as two types of supervised tasks, i.e., classification and regression.

Given the potential issues of over-fitting and spurious correlations that may arise with limited samples when a large pruning ratio is adopted, we focus on relatively large-scale datasets containing at least 40K molecules. 
Below, we briefly summarize the information of the datasets. For a more detailed description and statistics of the dataset, please refer to \cref{supp:datasets}.

\subsection{Implementation details}
\label{sec:implement}
In this section, we provide a succinct overview of the implementation details for our experiments, including backbone models for different modalities, training details and evaluation protocols.

\textbf{Backbone models.} 
Given the two modalities involved in our experiment, we need corresponding backbone models for data modeling. Below is a concise introduction to the backbone models. For a more comprehensive understanding of the model architecture, please refer to the \cref{supp:model}.
\begin{itemize}
    \item For 2D graphs, we utilize the Graph Isomorphism Network (GIN)\cite{xu2018powerful} as the encoder. To ensure the generalizability of our research findings, we adopt the commonly recognized experimental settings proposed by Hu et al.\cite{hu2019strategies}, with 300 hidden units in each layer, and a 50\% dropout ratio. The number of layers is set to 5. 
    \item For 3D geometries, we employ two widely used backbone models, PaiNN \cite{schutt2021equivariant} and SchNet \cite{schutt2017schnet}, as the encoders for different datasets. For SchNet, we set the hidden dimension and the number of filters in continuous-filter convolution to 128. The interatomic distances are measured with 50 radial basis functions, and we stack 6 interaction layers. For PaiNN, we adopt the setting with 128 hidden dimension, 384 filters, 20 radial basis functions, and stack 3 interaction layers.
\end{itemize}

\textbf{Training details.}
We adhere to the settings proposed by \cite{hu2019strategies} for our experiments. In classification tasks, the dataset is randomly split, with an 80\%/10\%/10\% partition for training, validation and testing, respectively. In regression tasks, the dataset is divided into 110K molecules for training, 10K for validation, and another 10K for testing. The Adam optimizer \cite{kingma2014adam} is employed for training with a batch size of 256. For classification tasks, the learning rate is set at 0.001 and we opt against using a scheduler.
For regression tasks, we align with the original experimental settings of PaiNN and SchNet, setting the learning rate to $5\times 10^{-4}$ and incorporating a cosine annealing scheduler.

\textbf{Evaluation protocols.} We conduct a series of experiments between model performance and varied data quantities. Specifically, we divide the pruning ratio into six proportional subsets: [20\%, 40\%, 60\%, 70\%, 80\%, 90\%], and for each configuration, we randomly select five seeds and report the mean performance. 
For \texttt{HIV} datasets, performance is measured using the Area Under the ROC-Curve (ROC-AUC), while reporting the performance on \texttt{PCBA} in terms of Average Precision (AP) ---higher values in both metrics indicate better performance.
When assessing quantum property predictions in the QM9 dataset, the Mean Absolute Error (MAE) is used as the performance metric, with lower values indicating better accuracy.

\section{Empirical Studies}
\subsection{Empirical analysis on classification tasks}
\label{sec:exp_main}
\newcommand{\blue}[1]{$_{\color{RoyalBlue}\downarrow #1}$}
\newcommand{\red}[1]{$_{\color{Salmon}\uparrow #1}$}
\begin{table*}[t]
    \caption{The performance comparison to state-of-the-art methods on HIV and PCBA in terms of ROC-AUC (\%, \(\uparrow\)) and Average Precision (\%, \(\uparrow\)). We highlight the best-performing results in \textbf{boldface}. The performance difference with whole dataset training is highlighted with {\color{RoyalBlue}{blue}} and {\color{Salmon}{orange}}, respectively.
    }
    \centering
    \footnotesize
    \renewcommand\arraystretch{1.3}
    \setlength{\tabcolsep}{1.5pt}
    \begin{threeparttable}
    \begin{adjustbox}{max width=\textwidth} 
    \begin{tabular}{cc|cccccc|cccccc}
    \toprule[1.5pt]
    \multirow{3}{*}{} & Dataset  & \multicolumn{6}{c|}{HIV} & \multicolumn{6}{c}{PCBA}  \\
    \midrule
    & Pruning Ratio \% & 90 & 80 & 70 & 60 & 40 & 20 & 90 & 80 & 70 & 60 & 40 & 20 \\ \midrule
    \parbox[t]{4mm}{\multirow{17}{*}{\rotatebox[origin=c]{90}{Static}}}
    & Hard Random~
    & 77.7\blue{7.4} & 81.1\blue{4.0} & 81.3\blue{3.8} & 82.4\blue{2.7} & 84.0\blue{1.1} & 85.0\blue{0.1} & 14.6\blue{11.7} & 18.7\blue{7.6} & 21.1\blue{5.2} & 23.2\blue{3.1} & 25.3\blue{1.0} & 26.2\blue{0.1}\\
    & CD~
    & 77.5\blue{7.6} & 80.9\blue{4.2} & 81.5\blue{3.6} & 82.7\blue{2.4} & 83.4\blue{1.7} & 84.9\blue{0.2} & 14.7\blue{11.6} & 18.0\blue{8.3} & 20.8\blue{5.5} & 21.9\blue{4.4} & 25.1\blue{1.2} & 26.0\blue{0.3}\\
    & Herding~
    & 63.6\blue{21.5} & 72.0\blue{13.1} & 73.9\blue{11.2} & 76.9\blue{8.2} & 82.2\blue{2.9} & 84.7\blue{0.4} & 8.1\blue{18.2} & 10.6\blue{15.7} & 11.7\blue{14.6} & 13.7\blue{12.6} & 17.2\blue{9.1} & 22.6\blue{3.7}\\
    & K-Means~
    & 61.8\blue{23.3} & 68.5\blue{16.6} & 74.9\blue{10.2} & 78.5\blue{6.6} & 81.4\blue{3.7} & 83.7\blue{1.4} & 12.8\blue{13.5} & 16.7\blue{9.6} & 19.6\blue{6.7} & 21.4\blue{4.9} & 24.1\blue{2.2} & 25.8\blue{0.5}\\
    & Least Confidence~
    & 79.2\blue{5.9} & 81.0\blue{4.1} & 82.4\blue{2.7} & 82.8\blue{2.3} & 83.2\blue{1.9} & 85.1\blue{0.0} & 14.4\blue{11.9} & 19.2\blue{7.1} & 21.6\blue{4.7} & 23.2\blue{3.1} & 25.7\blue{0.6} & 26.0\blue{0.3}\\
    & Entropy~
    & 78.7\blue{6.4} & 81.1\blue{4.0} & 81.3\blue{3.8} & 82.4\blue{2.7} & 84.3\blue{0.8} & 85.2\red{0.1} & 14.6\blue{11.7} & 18.4\blue{7.9} & 21.4\blue{4.9} & 23.2\blue{3.1} & 25.5\blue{0.8} & 26.7\red{0.4}\\
    & Forgetting~
    & 80.0\blue{5.1} & 81.6\blue{3.5} & 82.9\blue{2.2} & 83.8\blue{1.3} & 84.7\blue{0.4} & 84.9\blue{0.3} & 15.3\blue{11.0} & 18.9\blue{7.4} & 21.3\blue{5.0} & 22.3\blue{4.0} & 25.3\blue{1.0} & 26.1\blue{0.2}\\
    & GraNd-4~
    & 77.5\blue{7.6} & 81.2\blue{3.9} & 81.7\blue{3.4} & 82.8\blue{2.3} & 83.7\blue{1.4} & 84.5\blue{0.6} & 14.7\blue{11.6} & 18.4\blue{7.9} & 21.1\blue{5.2} & 22.6\blue{3.7} & 25.5\blue{0.8} & 26.2\blue{0.1}\\
    & GraNd-20~
    & 80.1\blue{5.0} & 82.5\blue{2.6} & 83.0\blue{2.1} & 83.9\blue{1.2} & 84.7\blue{0.4} & 84.9\blue{0.2} & 15.8\blue{10.5} & 19.4\blue{6.9} & 22.0\blue{4.3} & 23.1\blue{3.2} & 25.7\blue{0.6} & 26.0\blue{0.3}\\
    & DeepFool~
    & 76.8\blue{8.3} & 80.9\blue{4.2} & 81.5\blue{3.6} & 82.0\blue{3.1} & 83.1\blue{2.0} & 84.6\blue{0.5} & 13.9\blue{12.4} & 17.5\blue{8.8} & 20.9\blue{5.4} & 22.2\blue{4.1} & 24.9\blue{1.4} & 25.9\blue{0.4}\\
    & Craig~
    & 76.5\blue{8.6} & 80.8\blue{4.3} & 81.3\blue{3.8} & 82.5\blue{2.6} & 83.8\blue{1.3} & 85.0\blue{0.1} & 14.5\blue{11.8} & 18.7\blue{7.6} & 21.3\blue{5.0} & 22.9\blue{3.4} & 25.1\blue{1.2} & 26.0\blue{0.3}\\
    & Glister~
    & 80.9\blue{4.2} & 82.3\blue{2.8} & 83.4\blue{1.7} & 84.0\blue{1.1} & 84.9\blue{0.2} & 85.2\red{0.1} & 15.5\blue{10.8} & 18.8\blue{7.5} & 21.6\blue{4.7} & 23.2\blue{3.1} & 25.3\blue{1.0} & 26.1\blue{0.2}\\
    & Influence~
    & 76.5\blue{8.6} & 80.5\blue{4.6} & 81.7\blue{3.4} & 82.5\blue{2.6} & 83.4\blue{1.7} & 84.2\blue{0.9} & 13.7\blue{12.6} & 17.9\blue{8.4} & 20.5\blue{5.8} & 22.1\blue{4.2} & 24.5\blue{1.6} & 25.4\blue{0.9}\\
    & EL2N-20~
    & 79.8\blue{5.3} & 82.0\blue{3.1} & 83.5\blue{1.6} & 84.0\blue{1.1} & 85.4\red{0.3} & 85.1\blue{0.0} & 14.7\blue{11.6} & 19.1\blue{7.2} & 21.7\blue{4.6} & 22.5\blue{3.8} & 25.5\blue{0.8} & 26.1\blue{0.2}\\
    & DP~
    & 77.9\blue{7.2} & 80.1\blue{5.0} & 82.5\blue{2.6} & 83.7\blue{1.4} & 84.6\blue{0.5} & 85.0\blue{0.1} & 14.1\blue{12.2} & 18.2\blue{8.1} & 20.9\blue{5.4} & 22.8\blue{3.5} & 25.1\blue{1.2} & 25.9\blue{0.4}\\
    \midrule
    \parbox[t]{4mm}{\multirow{5}{*}{\rotatebox[origin=c]{90}{Dynamic}}}
    & Soft Random
    & 82.3\blue{2.8} & 83.7\blue{1.4} & 84.3\blue{0.8} & 84.6\blue{0.5} & 85.0\blue{0.1} & 85.1\blue{0.0} 
    & 16.1\blue{10.2} & 19.2\blue{7.1} & 21.0\blue{5.3} & 22.3\blue{4.0} & 24.2\blue{2.1} & 25.4\blue{0.9}\\

    & $\epsilon$-greedy~
    & 82.5\blue{2.6} & 83.2\blue{1.9} & 83.7\blue{1.4} & 84.1\blue{1.0}& 84.8\blue{0.3} & 85.1\blue{0.0} 
    & 16.5\blue{9.8} & 19.8\blue{6.5} & 20.3\blue{6.0} & 21.5\blue{4.8} & 23.8\blue{2.5} & 25.2\blue{1.1}\\

    & UCB~
    & 82.6\blue{2.5} & 83.0\blue{2.1} & 83.5\blue{1.6} & 83.9\blue{1.2}& 84.5\blue{0.6} & 84.7\blue{0.4} 
    & 16.7\blue{9.6} & 20.2\blue{6.1} & 22.0\blue{4.3} & 23.5\blue{2.8} & 24.9\blue{1.4} & 26.1\blue{0.2}\\

    & InfoBatch\tnote{1}~
    & 82.9\blue{2.2} & 83.5\blue{1.6} & 84.4\blue{0.7} & 84.9\blue{0.2} & 85.4\red{0.3} & 85.2\red{0.1} 
    & 19.9\blue{6.4} & 22.8\blue{3.5} & 24.5\blue{1.8} & 25.5\blue{0.8} & 26.8\red{0.5} & 27.0\red{0.7}\\

    & \themodel & \textbf{83.7\blue{\textbf{1.4}}} & \textbf{84.8\blue{\textbf{0.3}}} &  \textbf{85.3\red{\textbf{0.2}}} & \textbf{85.5\red{\textbf{0.4}}}& \textbf{86.0\red{\textbf{0.9}}} & \textbf{85.6\red{\textbf{0.5}}} &
    \textbf{20.7\blue{\textbf{5.6}}} & \textbf{23.9\blue{\textbf{2.4}}} &  \textbf{25.6\blue{\textbf{0.7}}} & \textbf{26.4\red{\textbf{0.1}}}& \textbf{26.8\red{\textbf{0.5}}} & \textbf{27.0\red{\textbf{0.7}}} \\

    \midrule
    \multicolumn{2}{c|}{Whole Dataset} & \multicolumn{6}{c|}{85.1$\pm$0.3} & \multicolumn{6}{c}{26.3$\pm$0.1} \\
    \bottomrule[1.5pt]
    \end{tabular}
    \end{adjustbox}
    \begin{tablenotes}[flushleft]
    \item[1] To make a fair comparison, we remove the annealing operation in the InfoBatch, since it uses the full dataset \\for training at later epochs.
     \end{tablenotes}
    \end{threeparttable}
    \label{tab:cls_main}
\end{table*}

\begin{table*}[t]
\vspace{-3mm}
    \caption{The performance comparison to state-of-the-art methods on QM9 dataset in terms of MAE (\(\downarrow\)). We highlight the best- and the second-performing results in \textbf{boldface} and \underline{underlined}, respectively. 
    }
    \centering
    \footnotesize
    \renewcommand\arraystretch{1.2}
    \setlength{\tabcolsep}{7pt}
    \begin{adjustbox}{width=\textwidth}
    \begin{tabular}{cc|cccccc|cccccc}
    \toprule
    \multirow{3}{*}{} & Dataset  & \multicolumn{6}{c|}{QM9-U0 (meV)} & \multicolumn{6}{c}{QM9-Zpve (meV)}  \\
    \midrule
    & Pruning Ratio \% & 90 & 80 & 70 & 60 & 40 & 20 & 90 & 80 & 70 & 60 & 40 & 20 \\ \midrule
    & Random~
    & \textbf{85.0} & \textbf{45.7} & \underline{34.2} & 30.9  & \underline{19.2} & 15.7
    & \textbf{4.94} & \textbf{3.09} & \underline{2.53} & \underline{2.26}  & 1.93 & 1.65 \\
    & DP~
    & 136.0 & 68.5 & 39.8 & 32.3  & 20.8 & 16.1
    & 8.56 & 6.29 & 3.62 & 2.36  & 2.05 & 1.68 \\
    & InfoBatch~
    & 116.0 & 57.0 & 36.4 & \underline{30.1}  & 20.4 & \underline{15.6}
    & 6.26 & 4.61 & 3.22 & 2.34  & \underline{1.91} & \underline{1.64} \\
    & \themodel
    &  \underline{92.4} & \underline{48.2} & \textbf{32.4} & \textbf{26.1}  & \textbf{17.7} & \textbf{14.3} 
    & \underline{5.40} & \underline{3.18} & \textbf{2.51} & \textbf{2.24}  & \textbf{1.86} & \textbf{1.62} \\
    \bottomrule
    \end{tabular}
    \end{adjustbox}
    \vspace{-10pt}
    \label{tab:regression}
\end{table*}

Our empirical studies for classification tasks utilize the 2D graph modality as the input. We employ GIN as the backbone model and adopt GraphMAE \cite{hou2022graphmae} for model pre-training on the PCQM4Mv2 dataset. 
For a comprehensive comparison, we select the following two groups of DP methods as primary baselines in our experiments: static DP and dynamic DP, following \cite{qin2023infobatch}.
The majority of previous methods fall into the former group, from which we select 14 competitive and classic DP methods as baselines, i.e., hard random pruning, CD \cite{agarwal2020contextual}, Herding \cite{welling2009herding}, K-means \cite{sorscher2022beyond}, Least Confidence \cite{coleman2019selection}, Entropy \cite{coleman2019selection}, Forgetting \cite{toneva2018empirical}, GraNd \cite{paul2021deep}, EL2N \cite{paul2021deep}, DeepFool \cite{ducoffe2018adversarial}, Craig \cite{mirzasoleiman2020coresets}, Glister \cite{killamsetty2021glister}, Influence \cite{koh2017understanding} and DP \cite{yang2022dataset}. Since dynamic pruning remains a niche topic, we identify four methods, to the best of our knowledge, to serve as baselines, i.e., soft random pruning, $\epsilon$-greedy \cite{raju2021accelerating}, UCB \cite{raju2021accelerating} and InfoBatch \cite{qin2023infobatch}, with \themodel also falling into this category. Please refer to \cref{supp:related} for a more detailed introduction to the baselines.

\textbf{Performance comparison.} Empirical results for DP methods are presented in \cref{tab:cls_main}. Our systematic study suggests the following trends: 
\emph{\textbf{(i)} Dynamic DP strategies significantly outperform static DP strategies.} Soft random, as a fundamental baseline in dynamic DP, consistently outperforms the baselines of static groups across almost all pruning ratios, even surpassing some strong competitors such as Glister and GraNd. We also observe that the performance advantage of dynamic DP becomes more pronounced when the pruning ratio is relatively large. Intuitively, compared to fixing a subset for training, dynamic pruning can perceive the full dataset during training, thereby possessing a larger receptive field and naturally yielding better performance. As more data samples are retained, the ability of both groups to perceive the full training set converges, leading to smaller performance differences between them.
\emph{\textbf{(ii)} \themodel achieves the state-of-the-art performance across all proportions.} On the HIV dataset, we can achieve nearly lossless pruning by removing 80\% of the samples, surpassing other baseline methods significantly. Similarly, on the larger-scale PCBA dataset, we can still achieve lossless pruning by removing 60\% of the data.
\emph{\textbf{(iii)} \themodel brings superior generalization performance compared to fine-tuning on the full dataset.} For example, on the HIV dataset, we achieve an ROC-AUC performance of 86 when pruning 40\% of the data, surpassing the 85.1 achieved with training on the full dataset. This indicates that appropriate data pruning can better aid model generalization given a pre-trained model. However, as more downstream data is introduced, the improvement brought by our method diminishes, as shown by the 20\% pruning proportion, due to introducing data samples that hinder model generalization.

\begin{wrapfigure}{r}{60mm}
\centerline{
\begin{tabular}{c}
\hspace*{-3mm}\includegraphics[width=.4\textwidth,height=!]{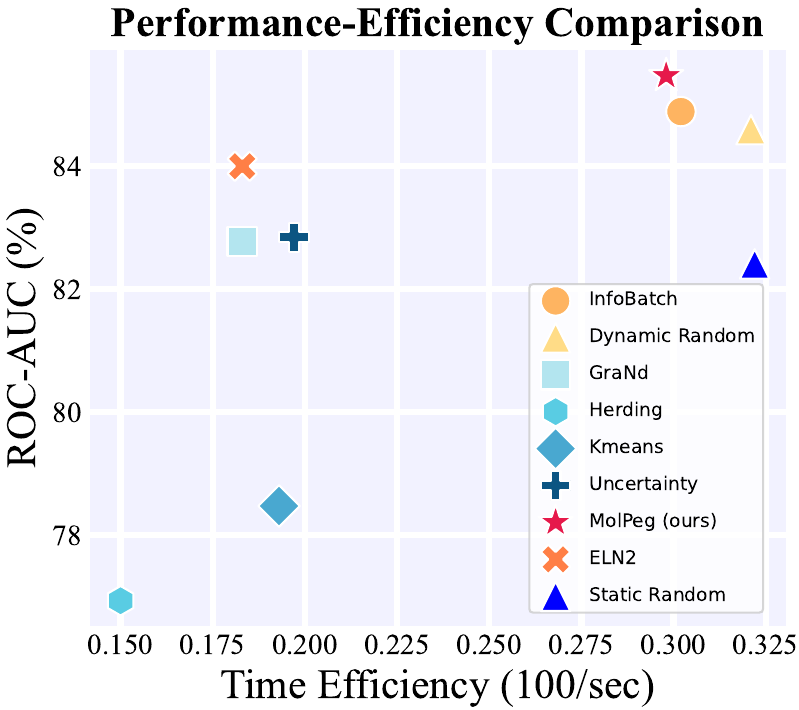}
\end{tabular}}
\caption{Performance and efficiency comparison between different DP methods. Pretrained models are fine-tuned on the HIV dataset at a 60\% pruning ratio.}
\label{fig:efficiency}
\end{wrapfigure}
\textbf{Efficiency comparison}.
In addition to performance, time efficiency is another crucial indicator for DP. We conduct a performance-efficiency comparison of various DP methods on the HIV dataset at a 60\% pruning ratio, as shown in \cref{fig:efficiency}. We define \emph{time efficiency} as the reciprocal of the runtime multiplied by 1000. A higher value of this metric indicates greater efficiency.
We can observe that despite \themodel experiencing slight efficiency loss compared to random pruning, it demonstrates superior pruning performance. Compared to the current SOTA baseline model, InfoBatch, our method achieves better model generalization with comparable efficiency. Conversely, static pruning methods incur 1.6x to 2.1x greater time costs than random pruning, with model performance stagnating or declining. This underscores that \themodel achieves superior performance with minimal efficiency costs. Despite increased memory usage introduced by the reference model, EMA is commonly used to stabilize molecular training, which allows our method to utilize EMA-saved models without added memory overhead.

\subsection{Results on QM9 dataset}

Since regression is another common type of downstream molecular task, we also present the empirical results of \themodel on two properties using the QM9 dataset, alongside comparisons with state-of-the-art methods. To ensure a fair comparison of experimental results, we employ the commonly used 3D geometry modality for modeling. We adopt GeoSSL \cite{liu2022molecular} as the pretraining strategy and PaiNN as the backbone model, following the settings outlined by Liu et al. Empirical results are presented in \cref{tab:regression}. It can be observed that \themodel consistently outperforms other DP methods. However, all DP methods unexpectedly demonstrate inferior performance than random pruning in certain pruning ratios (80\% and 90\%). We speculate this phenomenon is attributed to the PCQM4Mv2 dataset used for pre-training and the QM9 dataset having a close match in the distribution patterns of molecular features. Thus, any non-uniform sampling methods would lead to biased data pruning which exacerbates distribution shift and hinders domain generalization.

\subsection{Sensitivity Analysis}
We further conduct extensive sensitivity analysis to validate the robustness of \themodel across different pre-training strategies, molecular modalities, and hyperparameter choices. All experiments below are conducted on the HIV dataset.

\vspace{10pt}
\textbf{Robustness evaluation across pretraining strategies.}
\begin{figure}[t]
    \centering
    \begin{tabular}{cccc}
    \hspace*{-1mm}
    \includegraphics[width=.4\textwidth,height=!]{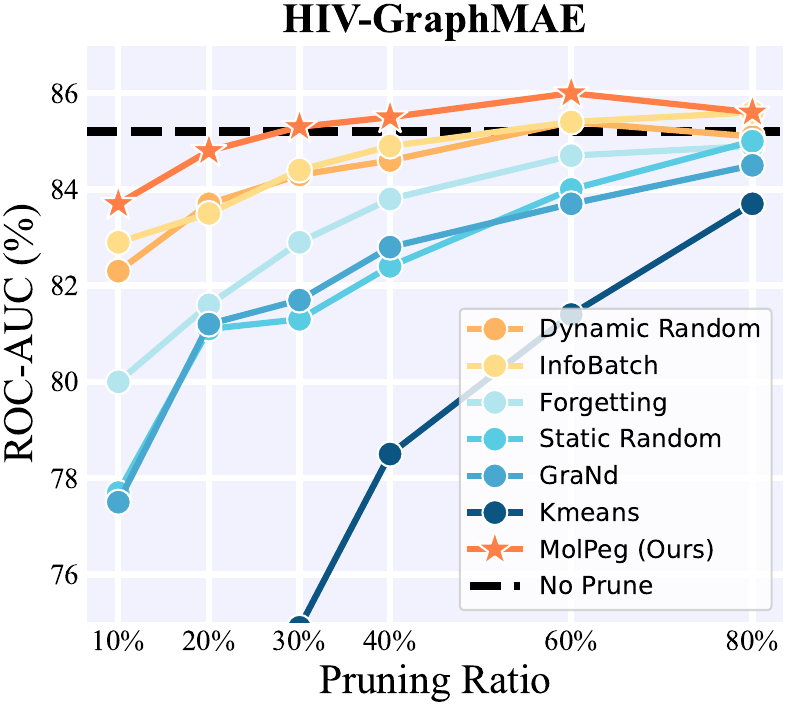}& 
    \hspace{-5mm}
    \includegraphics[width=.4\textwidth,height=!]{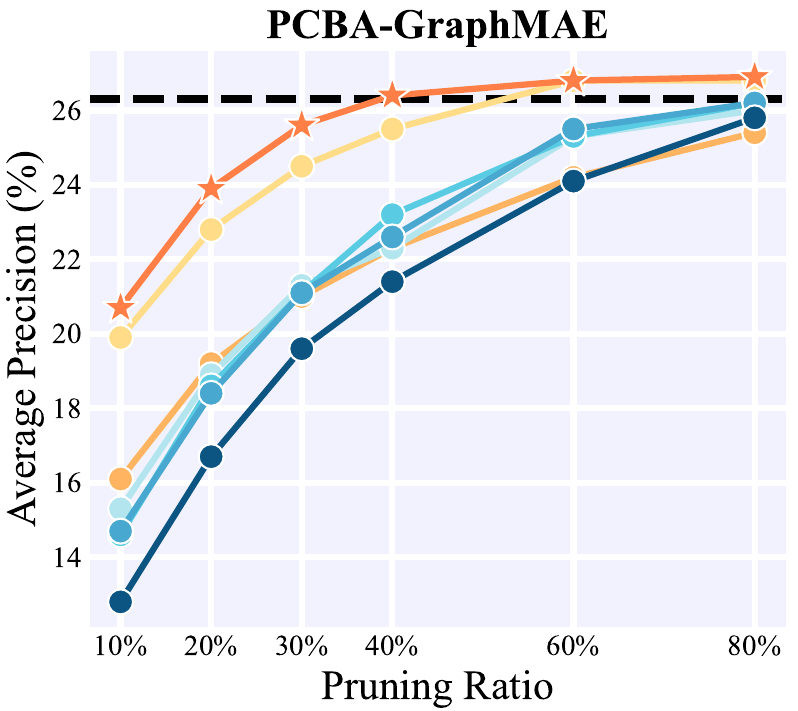}&
    \hspace{-5mm}
    \includegraphics[width=.4\textwidth,height=!]{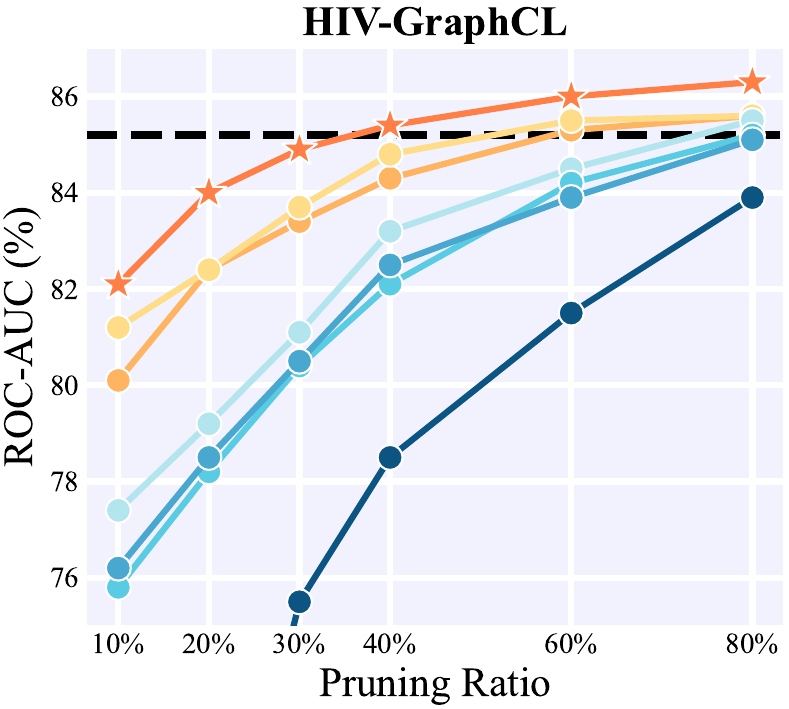}&
    \hspace{-5mm}
    \includegraphics[width=.4\textwidth,height=!]{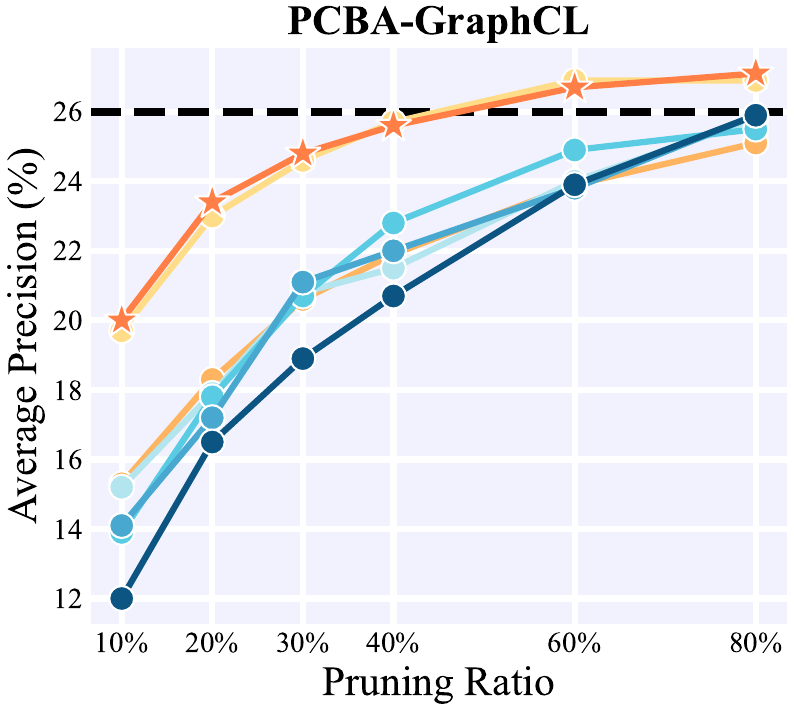}
    \end{tabular}
    \caption{Data pruning trajectory given by downstream performance (\%). Here the source models are pretrained on the PCQM4Mv2 dataset with GraphMAE and GraphCL strategies, respectively.}
    \label{fig:pretraining}
    \vspace{-20pt}
\end{figure}
Given that \themodel primarily targets scenarios involving pre-trained models, it is necessary to compare its robustness when applied with different pre-training strategies. Without loss of generality, we select two representative pre-training strategies: generative self-supervised learning (SSL) and contrastive self-supervised learning, both of which dominate the field of molecular pre-training. Specifically, in addition to the results based on GraphMAE \cite{hou2022graphmae} (generative SSL) presented in \cref{tab:cls_main}, we also conduct experiments based on GraphCL (contrastive SSL) \cite{you2020graph}  whose results are shown in \cref{fig:pretraining}. 
We can observe that \themodel achieves optimal performance on both pre-training methods across different pruning ratios. Promisingly, it demonstrates better model generalization than training on the full dataset, indicating insensitivity to pre-training strategies of our proposed framework, thus allowing for convenient plug-in application to other pre-trained models in different molecular tasks.

\begin{wraptable}{r}{80mm}
\caption{Performance with 3D modality on HIV dataset.}
\setlength{\tabcolsep}{10pt}
\begin{adjustbox}{width=0.55\textwidth}
    \begin{tabular}{c|ccc}
    \toprule
    Pruning Ratio \% & 60 & 40 & 20 \\
    \midrule
    Random Pruning & 80.1\blue{1.3} & 80.8\blue{0.6}& 81.2\blue{0.2}\\
    \themodel &  81.9\red{0.5}& 82.3\red{0.9} & 82.2\red{0.8} \\ 
    \midrule
    Whole Dataset& \multicolumn{3}{c}{81.4$\pm$1.7}\\
    \bottomrule
    \end{tabular}
    \hspace{2mm}
\end{adjustbox}
\label{tab:modality}
\end{wraptable}
\textbf{Robustness evaluation across modalities.}
The selection of molecular modality has long been a contentious issue in the field. To validate the effectiveness of \themodel across different molecular modalities, we present a comparison of pruning results using 3D geometry in the HIV dataset as shown in \cref{tab:modality}. We pretrain the SchNet \cite{schutt2017schnet} on the PCQM4Mv2 dataset, and keep other settings the same as in \cref{sec:implement}. It is evident from the results that the \themodel framework, consistent with the conclusions drawn in \cref{sec:exp_main}, continues to outperform dynamic random pruning and enhance the model generalization ability. At a 40\% pruning ratio, \themodel also surpasses the performance achieved with training on the full dataset.
This demonstrates the robustness of our proposed DP method across molecular modalities.

\begin{wrapfigure}{r}{80mm}
\centerline{
\begin{tabular}{c}
\hspace*{-3mm}\includegraphics[width=.7\textwidth,height=!]{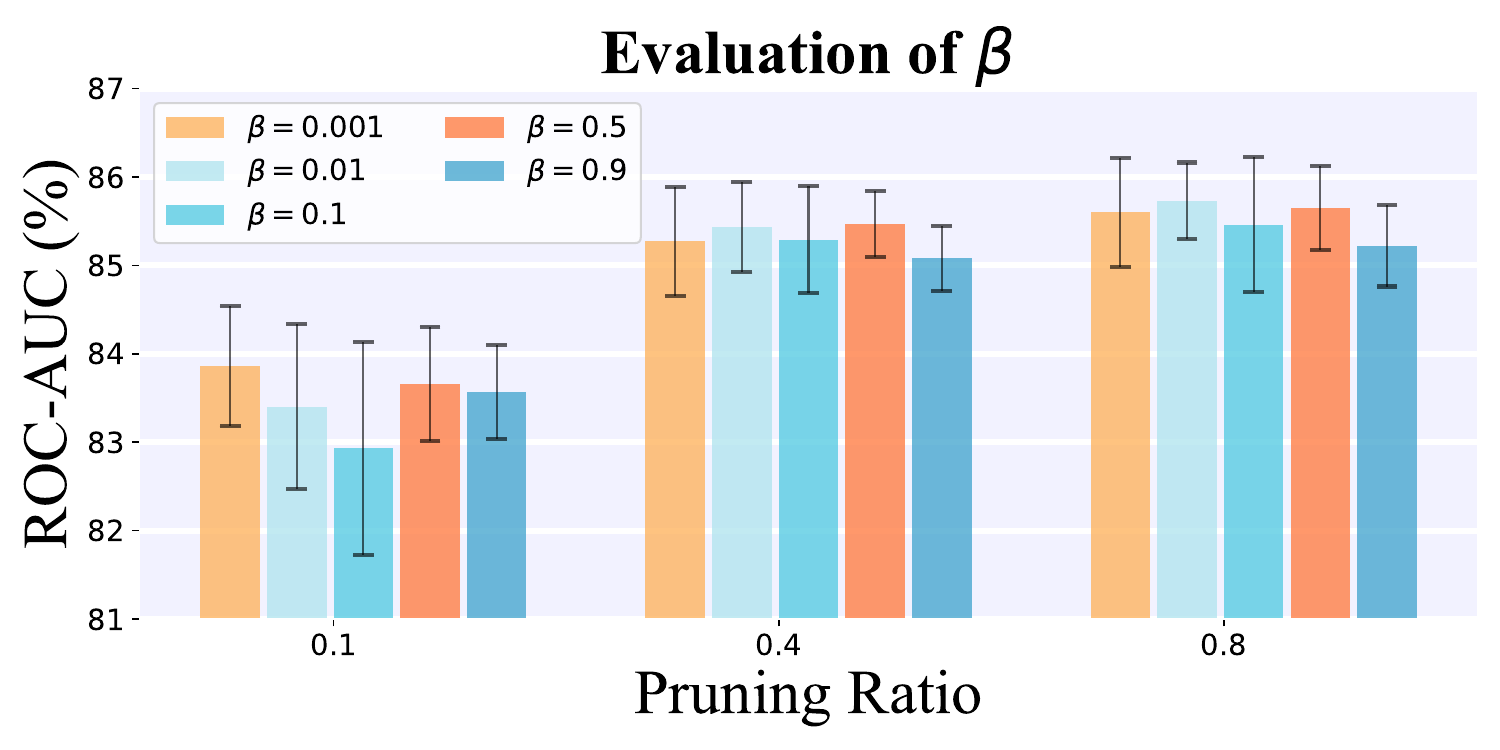}
\end{tabular}}
\caption{Performance bar chart of different choices of hyper-parameter $\beta$ on HIV dataset. The error bar is measured in standard deviation and plotted in grey color.}
\label{fig:beta}
\end{wrapfigure}
\textbf{How to choose $\beta$.} 
Since EMA is a crucial component of our framework, it is necessary to evaluate how to choose a proper $\beta$. We conduct an empirical analysis on the HIV dataset across three pruning ratios, i.e., [0.1, 0.4, 0.8], and consider a candidate list covering the value ranges of $\beta$: [0.001, 0.01, 0.1, 0.5, 0.9]. Intuitively, a smaller $\beta$ implies a slower parameter update pace in the reference model. When $\beta=0$, it signifies using a frozen pre-trained model as the reference. The experimental results corresponding to the variation of $\beta$ are illustrated in \cref{fig:beta}. Empirical results indicate that the overall performance shows only moderate sensitivity to parameter change. However, typically, when $\beta=0.5$, the model tends to achieve better performance and smaller standard deviation. Hence, for our primary experiments, we opt to default to $\beta=0.5$.

\section{Conclusion}
In this work, we propose \themodel, a novel molecular data pruning framework designed to enhance generalization without the need for source domain data, thereby addressing the limitations of existing in-domain data pruning (DP) methods. Our approach leverages two models with different update paces to measure the informativeness of samples. Through extensive experiments across four downstream tasks involving both classification and regression tasks, we demonstrate that \themodel not only achieves lossless pruning but also outperforms full dataset training in certain scenarios. This underscores the potential of \themodel to optimize training efficiency and improve the generalization of pre-trained models in the molecular domain. Our contributions highlight the importance of considering source domain information in DP methods and pave the way for more efficient and scalable training paradigms in molecular machine learning. We provide further discussions in \cref{supp:discuss}.

\clearpage
\bibliographystyle{unsrt}
\bibliography{reference}
\clearpage
\appendix
\tableofcontents
\clearpage
\section{Datasets and Tasks}
\label{supp:datasets}
In the following, we will elaborate on the adopted datasets and the statistics are summarized in \cref{tab:dataset}.
\begin{table}[!h]
	\centering
	\caption{Statistics of datasets used in experiments.}
    \begin{tabular}{cccccccc}
    \toprule
    & Dataset & Data Type & \#Molecules  & Avg. \#atoms & Avg. \#bonds & \#Tasks & Avg. degree \\
    \midrule
    \parbox[t]{12mm}{\multirow{1}{*}{\rotatebox[origin=c]{0}{Pre-training}}} 
     & PCQM4Mv2   & SMILES & 3,746,620 & 14.14 & 14.56 & -    & 2.06  \\
    \midrule
    \parbox[t]{12mm}{\multirow{5}{*}{\rotatebox[origin=c]{0}{Finetuning}}} 
     & HIV   & SMILES & 41,127 & 25.51 & 27.47 & 1     & 2.15  \\
     & PCBA  & SMILES & 437,929 & 25.96 & 28.09 & 92   & 2.16    \\ 
     & QM9-U0  & SMILES, 3D & 130,831 & 18.03 & 18.65 & 1   &2.07 \\
     & QM9-ZPVE  & SMILES, 3D & 130,831 & 18.03 & 18.65 & 1   &2.07 \\
    \bottomrule
    \end{tabular}
	\label{tab:dataset}
\end{table}

\begin{itemize}
  \item \textbf{PCQM4Mv2} is a quantum chemistry dataset curated by Hu et al. \cite{hu2020open} based on the PubChemQC project \cite{nakata2017pubchemqc}. It comprises 3,746,620 molecules and is extensively utilized in molecular pretraining tasks. We also adopt this widely recognized dataset for our molecular pretraining endeavors.
  \item \textbf{HIV} dataset is designed to evaluate the ability of molecular compounds to inhibit HIV replication \cite{AIDS:as} in a binary classification setting, consisting of 41,127 organic molecules.
  \item \textbf{PCBA} is a dataset consisting of biological activities of small molecules generated by high-throughput screening \cite{wang2012pubchem}. It contains 437,929 molecules with annotations of 92 classification tasks.
  \item \textbf{QM9} is a comprehensive dataset, structured for regression tasks, that provides geometric, energetic, electronic and thermodynamic properties for a subset of GDB-17 database, comprising 134 thousand stable organic molecules with up to nine heavy atoms \cite{ruddigkeit2012enumeration}.
  In our experiments, we delete 3,054 uncharacterized molecules which failed the geometry consistency check \cite{ramakrishnan2014quantum}.
  We include the U0 and ZPVE in our experiment, which cover properties related to stability, and thermodynamics. These properties collectively capture important aspects of molecular behavior and can effectively represent various energetic and structural characteristics within the QM9 dataset.
\end{itemize}

\section{Computing infrastructures}
\label{supp:infra}
\paragraph{Software infrastructures.} All of the experiments are implemented in Python 3.7, with the following
supporting libraries: PyTorch 1.10.2 \cite{paszke2019pytorch}, PyG 2.0.3 \cite{fey2019fast}, RDKit 2022.03.1 \cite{landrum2016rdkit}.
\paragraph{Hardware infrastructures.} We conduct all experiments on a computer server with 8 NVIDIA GeForce RTX 3090 GPUs (with 24GB memory each) and 256 AMD EPYC 7742 CPUs.

\section{Related work}
\label{supp:related}
Data pruning (DP) has been an ongoing research topic since the rise of deep learning. Traditional data pruning strategies often focus solely on the task dataset, exploring ways to represent the distribution of the entire dataset with fewer data points, thereby reducing training costs. However, with the recent advancements in transfer learning, focusing solely on the task dataset has become insufficient. Consequently, some data pruning strategies have been developed for transfer learning scenarios. We classify these strategies into in-domain data pruning and cross-domain data pruning.

\textbf{In-domain data pruning.} Most existing data pruning methods fall into this category. We further divide them into two groups: static data pruning and dynamic data pruning following \cite{qin2023infobatch}. Static data pruning aims to select a subset of data that remains unchanged throughout the training process, while dynamic data pruning methods consider that the optimal data subset evolves dynamically during training. Guo et al. \cite{guo2022deepcore} classify existing static data pruning methods based on their scoring function into the following categories: geometry \cite{agarwal2020contextual,chen2012super,sener2017active,sinha2020small,xia2022moderate}, uncertainty \cite{coleman2019selection}, loss \cite{toneva2018empirical,paul2021deep,qin2023infobatch}, decision boundary\cite{ducoffe2018adversarial, margatina2021active}, gradient matching \cite{killamsetty2021grad,mirzasoleiman2020coresets}, bilevel optimization \cite{borsos2020coresets,killamsetty2021glister,killamsetty2021retrieve}, submodularity \cite{kaushal2021prism, kothawade2021similar, wei2015submodularity}, and proxy \cite{sachdeva2021svp, coleman2019selection}. Despite dynamic data pruning is still in its early stages, it has demonstrated superior performance. Raju et al. \cite{raju2021accelerating} propose two dynamic pruning methods called UCB and $\epsilon$-greedy. These methods define an uncertainty value and calculate the estimated moving average. During each pruning period, $\epsilon$-greedy or UCB is used to select a fraction of the samples with the highest scores, and training is then conducted on these selected samples for that period. Recently, InfoBatch \cite{qin2023infobatch} achieves lossless pruning based on loss distribution and rescales the gradients of the remaining samples to approximate the original gradient.
However, all of these methods place much emphasis on the target domain while ignoring the widespread use of transfer learning. 

\textbf{Cross-domain data pruning.} We observe that with the use of pretraining, there is an additional source domain alongside the target domain. The key issue now is how to effectively utilize the information from both domains for data pruning in the context of transfer learning. To effectively address downstream tasks, a straightforward approach is to measure the distribution shift between the upstream and downstream data, and then prune the pretraining dataset to align its distribution with that of the downstream dataset  \cite{xie2023data,zhang2023selectivity,xu2023better,kim2023coreset}. However, this method requires retraining the pretrained model for each different downstream task, which contradicts the intended \emph{pretrain-finetune} paradigm. Therefore, we propose the problem of \emph{source-free data pruning} which is aligned with practical usage of transfer learning.

\section{Backbone Model}
\label{supp:model}

\subsection{Embedding 2D graphs}
Graph Isomorphism Network (GIN) \cite{xu2018powerful} is a simple and effective model to learn discriminative graph representations, which is proved to have the same representational power as the Weisfeiler-Lehman test \cite{weisfeiler:1968reduction}.
Recall that each molecule is represented as \(\mathcal{G} = (\bm{A}, \bm{X}, \tens{E})\), where \(\bm{A}\) is the adjacency matrix, \(\bm{X}\) and \(\tens{E}\) are features for atoms and bonds respectively.
The layer-wise propagation rule of GIN can be written as:
\begin{equation}
	\bm{h}_i^{(k + 1)} = f_{\text{atom}}^{(k+1)}\left(
	\bm{h}_i^{(k)} + \sum_{j \in\mathcal{N}(i)}
	\left(\bm{h}_j^{(k)} + f_{\text{bond}}^{(k+1)}(\etens{E}_{ij}))
	\right)\right),
\end{equation}
where the input features \(\bm{h}^{(0)}_i = \bm{x}_i\), \(\mathcal{N}(i)\) is the neighborhood set of atom \(v_i\), and \(f_{\text{atom}}, f_{\text{bond}}\) are two MultiLayer Perceptron (MLP) layers for transforming atoms and bonds features, respectively.
By stacking \(K\) layers, we can incorporate $K$-hop neighborhood information into each center atom in the molecular graph.
Then, we take the output of the last layer as the atom representations and further use the mean pooling to get the graph-level molecular representation:
\begin{equation}
\bm{z}^\text{2D} = \frac{1}{N}\sum_{i\in \mathcal{V}}\bm{h}_i^{(K)}.
\end{equation}

\subsection{Embedding 3D geometries}
\paragraph{SchNet\cite{schutt2017schnet}.}
We use the SchNet \cite{schutt2017schnet} as the encoder for the 3D geometries in HIV dataset.
SchNet models message passing in the 3D space as continuous-filter convolutions, which is composed of a series of hidden layers, given as follows:
\begin{equation}
    \bm{h}_i^{(k + 1)} = f_\text{MLP}\left(\sum_{j=1}^N f_\text{FG}(\bm{h}_j^{(t)}, \bm{r}_i, \bm{r}_j)\right) + \bm{h}_i^{(t)},\\
\end{equation}
where the input \(\bm{h}^{(0)}_i = \bm{a}_i\) is an embedding dependent on the type of atom \(v_i\), \(f_\text{FG}(\cdot)\) denotes the filter-generating network. 
To ensure rotational invariance of a predicted property, the message passing function is restricted to depend only on rotationally invariant inputs such as distances, which satisfying the energy properties of rotational equivariance by construction.
Moreover, SchNet adopts radial basis functions to avoid highly correlated filters. The filter-generating network is defined as follow:
\begin{equation}
	f_\text{FG}(\bm{x}_j, \bm{r}_i, \bm{r}_j) = \bm{x}_j\cdot e_k(\bm{r}_i - \bm{r}_j) = \bm{x}_j \cdot \exp (-\gamma \|\|\bm{r}_i - \bm{r}_j\|_2 - \mu \|_2^2).
\end{equation}
%The graph-level molecular representation is obtained by $\text{READOUT}(\cdot)$ function. The concrete form of $\text{READOUT}(\cdot)$ is flexible and correlated with specific downstream tasks.
Similarly, for non-quantum properties prediction concerned in this work, we take the average of the node representations as the 3D molecular embedding:
\begin{equation}
	\bm{z}^\text{3D} = \frac{1}{N}\sum_{i\in \mathcal{V}}\bm{h}_i^{(K)},
\end{equation}
where $K$ is the number of hidden layers.

\paragraph{PaiNN\cite{schutt2021equivariant}.} We use the PaiNN \cite{schutt2021equivariant} as the encoder for the 3D geometries in QM9 dataset. PaiNN identify limitations of invariant representations in SchNet and extend the message passing formulation to rotationally equivariant representations, attaining a more expressive SE(3)-equivariant neural network model.

\section{Proof of Theoretical Analyses}
\label{supp:proof}
\textbf{Assumption \ref{assumption}} (Slow parameter updating)
    \textit{Assume the learning rate is small enough, so that the parameter update $\Delta \theta_t=\theta_{t+1}-\theta_{t}$ is small for every time step, i.e. $||\Delta \theta_t||\leq\epsilon$, $\forall t\in\sN$, $\epsilon$ is a small constant.}
\begin{lemma}\label{lemma:1}
        With the assumption of slow parameter update, we can prove that $||\xi_t-\theta_t||\leq\frac{1-\beta}{\beta}\epsilon$.
\end{lemma}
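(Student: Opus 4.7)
}
My plan is to track the gap $d_t := \xi_t - \theta_t$ directly, derive a linear recurrence in $t$, unroll it, and then bound the resulting sum by a geometric series. The key observation is that the EMA update admits a neat algebraic rewriting in terms of $d_t$.

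Using the EMA definition $\xi_t = \beta\theta_t + (1-\beta)\xi_{t-1}$, I first rewrite
\begin{equation*}
  d_t = \beta\theta_t + (1-\beta)\xi_{t-1} - \theta_t = (1-\beta)(\xi_{t-1}-\theta_t).
\end{equation*}
Inserting the telescoping identity $\xi_{t-1}-\theta_t = (\xi_{t-1}-\theta_{t-1}) - \Delta\theta_{t-1}$ yields the clean recurrence
\begin{equation*}
  d_t = (1-\beta)\, d_{t-1} - (1-\beta)\,\Delta\theta_{t-1}.
\end{equation*}
This is the main algebraic manipulation; everything else follows from it.

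Next I would unroll the recurrence from the initialization $d_0 = \xi_0 - \theta_0 = 0$ (guaranteed by the paper's choice $\theta_0 = \xi_0 = \theta^{\mathcal{S}}$) to obtain
\begin{equation*}
  d_t \;=\; -\sum_{j=0}^{t-1} (1-\beta)^{\,t-j}\,\Delta\theta_j.
\end{equation*}
Applying the triangle inequality together with Assumption \ref{assumption} ($\|\Delta\theta_j\|\le\epsilon$) gives
\begin{equation*}
  \|d_t\| \;\le\; \epsilon \sum_{j=0}^{t-1} (1-\beta)^{\,t-j} \;=\; \epsilon \sum_{k=1}^{t} (1-\beta)^{k} \;\le\; \epsilon \sum_{k=1}^{\infty}(1-\beta)^{k} \;=\; \frac{1-\beta}{\beta}\,\epsilon,
\end{equation*}
where the last equality is the standard geometric-series sum valid for $\beta\in(0,1]$, which matches the stated bound.

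The proof is essentially bookkeeping, so I do not anticipate a serious obstacle; the only care needed is (i) getting the recurrence right given that the EMA uses the \emph{current} $\theta_t$ rather than $\theta_{t-1}$, and (ii) noting that the initialization $d_0 = 0$ eliminates the homogeneous term $(1-\beta)^t d_0$ in the unrolling, which is what allows the clean bound independent of $t$.
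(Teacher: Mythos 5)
Your proof is correct and follows essentially the same route as the paper: rewrite the EMA update as a recurrence for $\xi_t-\theta_t$ via $\xi_{t-1}-\theta_t=(\xi_{t-1}-\theta_{t-1})-\Delta\theta_{t-1}$, unroll using $\xi_0=\theta_0$, and bound the resulting sum $\sum_{k\ge 1}(1-\beta)^k\epsilon$ by the geometric series $\tfrac{1-\beta}{\beta}\epsilon$. Your unrolled sum $-\sum_{j=0}^{t-1}(1-\beta)^{t-j}\Delta\theta_j$ is just a reindexing of the paper's $-\sum_{j=1}^{t}(1-\beta)^{j}\Delta\theta_{t-j}$, so there is nothing further to add.
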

\begin{proof}
    \begin{equation}\label{eq:xi-theta}
    \begin{aligned}
        \xi_t-\theta_t&=  (1-\beta)\xi_{t-1}-(1-\beta)\theta_t  \\
        &=(1-\beta)(\xi_{t-1}-\theta_{t-1})- (1-\beta)\Delta\theta_{t-1}\\
        &=-\sum_{j=1}^t(1-\beta)^j\Delta\theta_{t-j}.
    \end{aligned}    
    \end{equation}
    For the first two equations, we respectively use the definition of EMA parameter update in \eqref{eq:1} and the definition of $\Delta \theta$. For the third equation, we iteratively employed the results from the previous two steps, along with the initial condition $\xi_0 = \theta_0$.  With Assumption~\ref{assumption}, we have
    \begin{equation}\label{eq:13}
        ||\xi_t-\theta_t||\leq \sum_{j=1}^t(1-\beta)^j\epsilon\leq\frac{1-\beta}{\beta}\epsilon
    \end{equation}
\end{proof}
For the following results, we use the default setting in experiment $\beta=0.5$, i.e. $||\xi_t-\theta_t||\leq \epsilon $. 

\textbf{Proposition \ref{thm: prop1}} (Interpretation of loss discrepancy)
    \textit{With Assumption~\ref{assumption}, the loss discrepancy can be approximately expressed by the dot product between the data gradient and the ``EMA gradient": }   
    \begin{equation}
    \begin{aligned}
        \gL(\vx,\xi_t)-\gL(\vx,\theta_t)=\alpha\nabla_\theta\gL(\vx,\theta_t)\vv^{EMA}_t+ O(\epsilon^2),
    \end{aligned}
    \end{equation}
    \textit{where $\vv^{EMA}_t$ denotes $\sum_{j=1}^t(1-\beta)^j \nabla_\theta\gL(\hat{\gD}_{t-j},\theta_{t-j})  $, i.e. the weighted sum of the historical gradients, which we termed as ``EMA gradient". }
      
  \begin{proof}
  From Lemma~\ref{lemma:1}, since $||\xi_t-\theta_t||$ is small, we can use Taylor expansion of the loss function at $\theta_t$:
\begin{equation}\label{app eq:taylor}
\begin{aligned}
    \gL(\vx,\xi_t)-\gL(\vx,\theta_t)&= \nabla_\theta\gL(\vx,\theta_t) (\xi_t-\theta_t) + O(||\xi_t-\theta_t||^2)\\
    &=\nabla_\theta\gL(\vx,\theta_t) \sum_{j=1}^t(1-\beta)^j \nabla_\theta\gL(\hat{\gD}_{t-j},\theta_{t-j}) + O(||\epsilon||^2),
\end{aligned}
\end{equation}
where we use \eqref{eq:xi-theta} and the definition of online parameter update in \eqref{eq:1}.  
\end{proof}

% 在梯度的视角下，我们的策略考察数据的梯度与EMA梯度的点乘的大小。given EMA梯度，点乘的大小受两方面因素影响，$\nabla_\theta\gL(\vx,\theta_t)$的模长与夹角，当前数据的梯度越大，越容易被选择，这make sense,因为梯度大也反映当前数据对参数update的影响更大; 当前数据的梯度方向与EMA梯度方向所在的直线夹角越小，越容易被选择. 这更明确地与前面的目标和intuitive解释一致（选简单和难的），可以分成两个case, 如果梯度方向与EMA梯度方向比较相反，则往往意味着该样本的优化比较困难，例如（离群？分类边界？）   此时approximately $\gL(\vx,\theta_t)>\gL(\vx,\xi_t)$意味着遗忘。

\textbf{Proposition \ref{thm: prop2}} (Gradient projection interpretation of \themodel)
\textit{In the context of neglecting higher-order small quantities, we define $\gD^+\in\gD$ and $\hat{\gD}^+\in\hat{\gD}$ as samples that have positive dot products between the data gradient and the ``EMA gradient", then}
    \begin{equation}        \nabla_\theta\gL(\hat{\gD}^+,\theta)=\nabla_\theta\gL(\gD^+,\theta)+a\vv^{EMA}+c\vv^{EMA}_\perp, a\geq 0, c\in\R.
\end{equation}
\textit{Similarly, we define $\gD^-\in\gD$ and $\hat{\gD}^-\in\hat{\gD}$ as samples that have negative dot products, then}
    \begin{equation}        \nabla_\theta\gL(\hat{\gD}^-,\theta)=\nabla_\theta\gL(\gD^-,\theta)+b\vv^{EMA}+d\vv^{EMA}_\perp, b\leq 0, d\in\R.
\end{equation}
\textit{Equality holds if and only if the absolute loss discrepancy $|\gL(\vx,\xi_t)-\gL(\vx,\theta_t)|$ across $\gD^+$ and $\gD^-$ is uniform. This is a rare situation, and in such a case, our data selection strategy degenerates to random selection on $\gD^+$ and $\gD^-$.}

Discussion about $c$ and $d$ is provided after the proof.
\begin{proof}
    In the context of neglecting higher-order small quantities, \themodel selects data with large loss discrepancies, meaning large dot products by using Proposition~\ref{thm: prop1}. That is $\forall x\in\hat{\gD}$ and $\forall x'\in\gD\setminus\hat{\gD}$, we have 
    \begin{equation}\label{eq:17}
        |\nabla_\theta\gL(x,\theta)\cdot \vv^{EMA}|\geq|\nabla_\theta\gL(x',\theta)\cdot \vv^{EMA}|.
    \end{equation}    
     Then for samples in $\gD^+$ and $\hat{\gD}^+$, we have
    \begin{equation}
    \begin{aligned}
        \frac{1}{|\hat{\gD}^+|}\sum_{x\in\hat{\gD}^+}\nabla_\theta\gL(x,\theta)\cdot \vv^{EMA}  \geq \frac{1}{|\gD^+|}\sum_{x\in\gD^+}\nabla_\theta\gL(x',\theta)\cdot \vv^{EMA} >0 
    \end{aligned}
    \end{equation}
    That is $\nabla_\theta\gL(\hat{\gD}^+,\theta)\cdot \vv^{EMA}\geq\nabla_\theta\gL(\gD^+,\theta)\cdot \vv^{EMA}>0$ for short.
     Thus when projecting $\nabla_\theta\gL(\hat{\gD}^+,\theta)-\nabla_\theta\gL(\gD^+,\theta)$ on $\vv^{EMA}$, the coefficient $a$ is $(\nabla_\theta\gL(\hat{\gD}^+,\theta)-\nabla_\theta\gL(\gD^+,\theta))\cdot \vv^{EMA}\geq 0$.
    
    Similarly,
    \begin{equation}
        \nabla_\theta\gL(\hat{\gD}^-,\theta)\cdot \vv^{EMA}\leq \nabla_\theta\gL(\gD^-,\theta)\cdot \vv^{EMA} <0
    \end{equation}
    Then $c\triangleq(\nabla_\theta\gL(\hat{\gD}^-,\theta)-\nabla_\theta\gL(\gD^-,\theta))\cdot \vv^{EMA}\leq 0$.
    
    The condition for $a=0$ ($c=0$) is that the equality in \eqref{eq:17} holds for samples in $\gD^+$ ($\gD^-$). 
\end{proof}
Since our selection strategy does not constrain the direction perpendicular to the EMA gradient, we consider a simplified model where $b$ and $d$ are treated as random variables with an expectation of zero. Consequently, in the sense of expectation, \eqref{eq:4} and \eqref{eq:5} hold. 
The feasibility of this simplified model is demonstrated as follows. Assume that $\nabla_\theta \gL(x, \theta) \cdot \vv^{EMA}_\perp$ for all samples are independent and identically distributed random variables with expectation $\mu$ and variance $\sigma^2$. When the sample sizes $|\gD^+|$ and $|\hat{\gD}^+|$ are sufficiently large, the central limit theorem implies that
$\frac{1}{|\gD^+|} \sum_{x \in \gD^+} \nabla_\theta \gL(x, \theta) \cdot \vv^{EMA}_\perp $ is approximately a Gaussian distribution $ \mathcal{N}\left(\mu, \frac{\sigma^2}{|\gD^+|}\right)$,
and similarly,
$\frac{1}{|\hat{\gD}^+|} \sum_{x \in \hat{\gD}^+} \nabla_\theta \gL(x, \theta) \cdot \vv^{EMA}_\perp $ is approximately a Gaussian distribution $ \mathcal{N}\left(\mu, \frac{\sigma^2}{|\hat{\gD}^+|}\right)$.
The expectation of their difference
$\E c = \E \nabla_\theta \gL(\hat{\gD}^+, \theta)\cdot \vv^{EMA}_\perp - \E \nabla_\theta \gL(\gD^+, \theta)\cdot \vv^{EMA}_\perp=0 $. Similarly, we can prove $\E d=0$.
% Therefore, our data selection strategy essentially increases the weight of the (opposite) EMA gradient direction in the data gradient for easy (difficult) samples. 
% As our selection strategy is independent of the loss component perpendicular to the EMA direction, we simplify $b$ and $d$ to random noise, despite they are deterministic in practice. 
% When $\gD^+$ predominates, indicating a majority of simple samples in the dataset, this simplified model is akin to the momentum optimization strategy, which utilizes the sum of the current data gradient and the weighted EMA gradient to update the model parameters. This suggests that retaining simple samples may enhance optimization stability, allowing the model to overcome saddle points and local minima \citep{goodfellow2016deep}.
% However, our method differs from the momentum optimization strategy in two key aspects. Firstly, we preserve directions opposite to the EMA gradient to target difficult and forgettable samples. Secondly, our EMA gradient, which records the gradient of the coreset rather than the entire set, can retain more historical information under the same update pace.
\section{Connections to Existing DP Methods}\label{sec app: Theoretical Comparison}
\subsection{\themodel \& GraNd \cite{paul2021deep}} In the pretraining scenario, where the initialization is fixed, the GraNd score is defined as the norm of the gradient $||\nabla_\theta\gL(\vx,\theta_t) ||$.
% 与GraNd score比较：GraNd score 的定义是 $||\nabla_\theta\gL(\vx,\theta_t) ||$，在原文无训练的场景下，GraNd score需要对随机初始化取期望，而在我们的场景中初始化固定，因此定义就是梯度的范数。
With Assumption \ref{assumption}, we can deduce $||\xi_t-\theta_t||\leq \epsilon $ as shown in \eqref{eq:13}, then the data selected by \themodel satisfies $\delta\leq|\gL(\vx,\theta_t)-\gL(\vx,\xi_t)|=|\nabla_\theta\gL(\vx,\theta_t) (\theta_t-\xi_t)+ O(\epsilon^2)|$ $ \leq \epsilon||\nabla_\theta\gL(\vx,\theta_t) ||+O(\epsilon^2) $. The data we select has a lower bound on the GraNd score $||\nabla_\theta\gL(\vx,\theta_t) ||\geq O(\frac{\delta}{\epsilon}) $, making it more likely to be chosen by the GraNd score.
% 我们的方法与GraNd score有相似之处：在缓慢更新的假设下，梯度大小能反映在梯度点乘大小上（上界），我们挑出的数据的GraNd score有下界，即更有可能被GraNd score选出；

%但也有不同,正如前文所述，我们不仅考虑了梯度的大小，还考虑了方向，在方向中体现了transfer,也带来了优化的好处
\subsection{\themodel \& Infobatch \cite{qin2023infobatch}} Our strategy employs relative loss scales rather than absolute values, enabling a more flexible adaptation for transfer scenarios. For simple downstream samples for pretraining model, where $\gL(\vx,\xi_t)$ is small, both Infobatch and \themodel eliminate samples with small online loss which are regarded as redundant for finetuning. However, for difficult samples for pretraining model, where $\gL(\vx,\xi_t)$ is large, our method diverges from Infobatch by preserving the crucial samples for transfer learning.
% 与infobatch比较：我们用的是相对loss大小，不是绝对大小，能够更灵活地适应迁移场景。训练初期，在L_S 小的情况下（对于上游简单的样本）infobatch去除的冗余样本也会被我们的方法去除，但在L_S 大的情况下（对于上游难的样本），我们不会像infobatch去除loss小的样本，而是保留了这些对于迁移有挑战的样本。

%如果我们考虑分类任务并使用accuracy loss，则我们的方法倾向于选择分类边界的样本，可以与forgetting方法联系起来，which aims to 挑选出遗忘（即先分对后分错的事件）次数多的样本。在缓慢更新的假设下，简单起见在2分类情况下解释。设类别预测概率$f$关于$\theta_t$l-连续，则||f(\vx,\theta_t)-f(\vx,\theta_t)||\leq l\epsilon, \gL(\vx,\theta_t)=|\argmax\{f(\vx,\theta_t)^{(0)},f(\vx,\theta_t)^{(1)}\}-y|,y\in\{0，1\} 关于$f$在分类边界f(\vx,\theta_t)^{(0)}=f(\vx,\theta_t)^{(1)}=0.5 不连续，因此absolute loss discrenpancy >0的样本落在 f(\vx,\theta_t)^{(0)}\in（0.5-\sqrt{l\epsilon/2},0.5+\sqrt{l\epsilon/2}）即分类边界附近。
\subsection{\themodel \& Forgetting \cite{toneva2018empirical}} If we consider classification tasks and use accuracy loss, our method tends to select samples near the classification boundary. This can be related to the forgetting method, which aims to select samples that have been forgotten (i.e., initially classified correctly and then incorrectly) multiple times. For simplicity, let's explain this in the context of binary classification under Assumption \ref{assumption}.
Further assume the class prediction probability $f$ is $l$-Lipschitz continuous with respect to the parameters $\theta$, where $f=(f^{(0)},f^{(1)})$ and $f^{(0)}+f^{(1)}=1$, we have $||f(\vx,\theta)-f(\vx,\xi)||\leq l\epsilon$. The loss function $\gL(\vx,\theta)=|\argmax_i\{f(\vx,\theta)^{(i)}\}-y|$, $y\in\{0,1\}$ is not continuous at the classification boundary where $f^{(0)}(\vx,\theta)=f^{(1)}(\vx,\theta)=0.5$. Consequently, only when the sample is located near the classification boundary $f^{(0)}(\vx,\theta)\in(0.5-l\epsilon/\sqrt{2},0.5+l\epsilon/\sqrt{2})$, exhibit a non-zero loss discrepancy.
%有待增添更多比较baseline

\section{Pseudo-code of \themodel}
We provide the pseudo-code of \themodel presented in \cref{alg:pseudo}.
\begin{algorithm}[t]
\caption{Molecular Data Pruning for Enhanced Generalization (\themodel)}\label{alg:pseudo}
\SetKwComment{Comment}{/* }{ */}
\textbf{Inputs:}\newline
$\mathcal{D}=\{(\bm{x_i},y_i, s_i)\}_{i=1}^{|\mathcal{D}|}$: dataset with the score for each example $(s_i=1, \forall s_i \in \mathcal{D})$; \newline
$\alpha$: learning rate; $\beta$: EMA update pace; \newline
$p$: data pruning ratio $(p< 1)$; $T$: total number of training epochs;\newline
$f_{\theta}$: pretrained encoder parameterized by $\theta$
\vspace{0.4em}
\hrule
\vspace{0.4em}
$t \gets 0;$

\While{$t \leq T$}{
    $K \gets p \cdot |\mathcal{D}|$ \Comment*[r]{Get the number of remaining samples}
    $\hat{\mathcal{D}_t} \gets \text{TopK} (s)$ \Comment*[r]{Rank and Select the top-K samples for training}
    $s_i \gets \|\mathcal{L}(f_{\theta}(\bm{x_i}),y_i) - \mathcal{L}(f_{\xi}(\bm{x_i}),y_i)\|, \forall (\bm{x_i}, y_i, s_i) \in \hat{\mathcal{D}_t}$ \Comment*[r]{Scoring the samples}
    $\theta \gets \theta - \alpha \nabla_\theta \mathcal{L}(\hat{\mathcal{D}_t}, \theta)$ \Comment*[r]{Gradient update for online model}
    $\xi \gets \beta \theta + (1-\beta) \xi$ \Comment*[r]{EMA update for reference model}
    $t \gets t + 1$
}
\Return
\end{algorithm}

\section{Discussions}
\label{supp:discuss}
\paragraph{Limitations and future works.} Our data pruning strategy is specifically designed for molecular downstream tasks, but source-free data pruning is a task setting with broad applications in other fields as well. For example, in large language models (LLMs) and heavy-weight vision models, pretraining data is often difficult for users to obtain or even kept confidential. However, we have not validated our method in these more general scenarios. Therefore, verifying the effectiveness of \themodel in natural language and vision tasks is one of our future research directions. Additionally, as the first work designed for the source-free data pruning setting, we have only made simple attempts at perceiving upstream and downstream knowledge via loss discrepancy. In the future, we will explore how better to utilize knowledge from both the source and target domains to achieve data pruning, which leaves significant potential to be explored.

\paragraph{Broader impacts.} Given that our application tasks fall within the molecular domain, improper use of methods for tasks such as molecular property prediction may result in significant deviations. This could impact subsequent applications of the molecules in drug development or materials design, especially in predicting properties like toxicity and stability. We recommend further experimental validation of key molecules after using the model to ensure the reliability of the results.

\end{document}